\renewcommand{\P}{\mathbb{P}}
\newcommand{\bpi}{\boldsymbol{\pi}}
\newcommand{\bx}{\boldsymbol{x}}
\newcommand{\bT}{\boldsymbol{\mathcal{T}}_{\!\! D}}
\newcommand{\bu}{\boldsymbol{u}}
\newcommand{\by}{\boldsymbol{y}}
\newcommand{\sY}{\mathcal{Y}}
\newcommand{\bz}{\boldsymbol{z}}
\newcommand{\ystar}{y^{\star}}
\newcommand{\bw}{\boldsymbol{w}}
\newcommand{\bv}{\boldsymbol{v}}
\newcommand{\bzero}{\boldsymbol{0}}
\newcommand{\sB}{\mathcal{B}}
\newcommand{\sX}{\mathcal{X}}
\newcommand{\sE}{\mathcal{E}}
\newcommand{\sT}{\mathcal{T}}
\newcommand{\sF}{\mathcal{F}}
\newcommand{\sH}{\mathcal{H}}
\DeclareMathOperator*{\argmin}{arg\,min}
\newcommand{\field}[1]{\mathbb{#1}}
\newcommand{\R}{\field{R}}
\newcommand{\Nat}{\field{N}}
\newcommand{\norm}[1]{\left\|{#1}\right\|}
\newcommand{\scO}{\mathcal{O}}
\newcommand{\sctilO}{\mathcal{\widetilde{O}}}
\newcommand{\dt}{\displaystyle}
\renewcommand{\ss}{\subseteq}
\newcommand{\wh}{\widehat}
\newcommand{\ve}{\varepsilon}
\newcommand{\yhat}{\wh{y}}
\newtheorem{definition}{Definition}
\newtheorem{lemma}{Lemma}
\newtheorem{theorem}{Theorem}
\newcommand{\reals}{\mathbb{R}}
\DeclareMathOperator*{\E}{\mathbb{E}}
\DeclareMathOperator*{\pack}{pack}
\newcommand*\diff{\mathop{}\!\mathrm{d}}
\DeclareMathOperator*{\RE}{KL}
\DeclareMathOperator*{\vol}{vol}
\newcommand{\bsigma}{\boldsymbol{\sigma}}
\newcommand{\pr}[1]{\left( #1 \right)}
\newcommand{\br}[1]{\left[ #1 \right]}
\newcommand{\cbr}[1]{\left\{ #1 \right\}}
\newcommand{\seq}{\bsigma}
\newcommand{\leaves}{\textsc{leaves}}
\newcommand{\parent}{\textsc{parent}}
\newcommand{\obs}{\mathrm{o}}
\newcommand{\unobs}{\mathrm{u}}
\newcommand{\wt}{\widetilde}
\newcommand{\defO}{\stackrel{\scO}{=}}
\newcommand{\defOtilde}{\stackrel{\wt{\scO}}{=}}
\algnewcommand{\Initialize}[1]{%
  \State \textbf{Initialize:}
  \Statex \hspace*{\algorithmicindent}\parbox[t]{.8\linewidth}{\raggedright #1}
}
\newcommand{\Rtree}{R^{\mathrm{tree}}}
\newcommand{\Rlocal}{R^{\mathrm{loc}}}
\newcommand{\HM}{\texttt{HM}}
\newcommand{\jmlrBlackBox}{\rule{1.5ex}{1.5ex}}
\newcommand{\jmlrQED}{\hfill\jmlrBlackBox\par\bigskip}
\title{Locally-Adaptive Nonparametric Online Learning}
\date{}
\author{
  Ilja Kuzborskij \thanks{Work partly done while at the University of Milan, Italy.}\\
  DeepMind\\
  \texttt{iljak@google.com}
 \and
 Nicol\`{o} Cesa-Bianchi\\
 Dept.\ of Computer Science \& DSRC\\
 University of Milan, Italy\\
 \texttt{nicolo.cesa-bianchi@unimi.it}
}
\begin{document}

\maketitle

\begin{abstract}
One of the main strengths of online algorithms is their ability to adapt to arbitrary data sequences. This is especially important in nonparametric settings, where
performance is measured against rich classes of comparator functions that are able to fit complex environments. Although such hard comparators and complex environments may exhibit local regularities,
efficient algorithms, which can provably take advantage of these local patterns, are hardly known.
We fill this gap by introducing efficient online algorithms (based on a single versatile master algorithm)
each adapting to one of the following regularities:
\emph{(i)} local Lipschitzness of the competitor function,
\emph{(ii)} local metric dimension of the instance sequence,
\emph{(iii)} local performance of the predictor across different regions of the instance space.
Extending previous approaches, we design algorithms that dynamically grow hierarchical $\ve$-nets on the instance space whose prunings correspond to different ``locality profiles'' for the problem at hand. Using a technique based on tree experts, we simultaneously and efficiently compete against all such prunings, 
and prove regret bounds
each scaling with
a quantity associated with a different type of local regularity.
When competing against ``simple'' locality profiles, our technique delivers regret bounds that are significantly better than those proven using the previous approach. On the other hand, the time dependence of our bounds is not worse than that obtained by ignoring any local regularities.

\end{abstract}

\section{Introduction}
\label{sec:intro}
In online convex optimization~\citep{zinkevich2003online,hazan2016introduction}, a learner interacts with an unknown environment in a sequence of rounds. In the specific setting considered in this paper, at each round $t=1,2,\ldots$ the learner observes an instance $\bx_t \in \sX \subset \reals^d$ and outputs a prediction $\yhat_t$ for the label $y_t \in \sY$ associated with the instance. After predicting, the learner incurs the loss $\ell_t(\yhat_t)$. We consider two basic learning problems: regression with square loss, where $\sY \equiv [0,1]$ and $\ell_t(\yhat_t) = \frac{1}{2} \pr{y_t - \yhat_t}^2$, and binary classification with absolute loss, where $\sY\equiv\{0,1\}$ and $\ell_t(\yhat_t) = |y_t - \yhat_t|$ (or, equivalently, $\ell_t(\yhat_t) = \P(y_t \neq Y_t)$ for randomized predictions $Y_t$ with $\P(Y_t=1) = \yhat_t$).
The performance of a learner is measured through the notion of \emph{regret}, which is defined as the amount by which the cumulative loss of the learner predicting with $\yhat_1,\yhat_2,\dots$ exceeds the cumulative loss ---on the same sequence of instances and labels--- of any function $f$ in a given reference class of functions $\sF$. Formally,
\begin{equation}
  \label{eq:regret}
  R_T(f) = \sum_{t=1}^T \Big(\ell_t(\yhat_t) - \ell_t\big(f(\bx_t)\big)\Big) \qquad \forall f \in \sF~.
\end{equation}

In order to capture complex environments, we focus on nonparametric classes $\sF$ of Lipschitz functions $f : \sX \to \sY$. The specific approach adopted in this paper is inspired by the simple and versatile algorithm from \cite{hazan2007online}, henceforth denoted with \HM, achieving a regret bound of the form
\footnote{We use $f \defO g$ to denote $f = \scO(g)$ and $f \defOtilde g$ to denote $f = \sctilO(g)$.}
\begin{equation}
	R_T(f)
\defO
\left\{ \begin{array}{cl}
	(\ln T) \big(L\,T\big)^{\frac{d}{d+1}} & \text{(square loss)}
\\
          L^{\frac{d}{d+2}}\, T^{\frac{d+1}{d+2}} & \text{(absolute loss)}
\end{array} \right.
\qquad
	\forall f \in \sF_L
\label{eq:hazanBound}
\end{equation}
for any given $L > 0$. Here $\sF_L$ is the class of $L$-Lipschitz functions $f : \sX \to \sY$ such that
\begin{equation}
\label{eq:lip}
	\big|f(\bx)-f(\bx')\big| \le L\,\norm{\bx-\bx'}
\end{equation}
for all $\bx,\bx' \in \sX$, where $\sX, \sY$ are compact.\footnote{
The bound for the square loss, which is not contained in \citep{hazan2007online}, can be proven with a straightforward extension of the analysis in that paper.
}
Although Lipschitzness is a standard assumption in nonparametric learning, a function in $\sF_L$ may alternate regions of low variation with regions of high variation. This implies that, if computed locally (i.e., on pairs $\bx,\bx'$ that belong to the same small region), the value of the smallest $L$ satisfying~(\ref{eq:lip}) would change significantly across these regions.
If we knew in advance the local Lipschitzness profile, we could design algorithms that exploit this information to gain a better control on regret.

Although, for $d \ge 2$, asymptotic rates $T^{(d-1)/d}$ improving on~\eqref{eq:hazanBound} can be obtained using different and more complicated algorithms \cite{cesa2017algorithmic}, it is not clear whether these other algorithms can be made locally adaptive in a principled way as we do with \HM. 

\paragraph{Local Lipschitzness.} Our first contribution is an algorithm for regression with square loss that competes against all functions in $\sF_L$. However, unlike the regret bound~(\ref{eq:hazanBound}) achieved by \HM, the regret $R_T(f)$ of our algorithm depends in a detailed way on the local Lipschitzness profile of $f$. Our algorithm operates by sequentially constructing a $D$-level hierarchical $\ve$-net $\sT$ of the instance space $\sX$ with balls whose radius $\ve$ decreases with each level of the hierarchy. The $D$ levels are associated with local Lipschitz constants $L_1 < L_2 < \cdots < L_D=L$, all provided as an input parameter to the algorithm.

\begin{figure}[H]
  \centering
  \includegraphics[scale=0.15,clip=false]{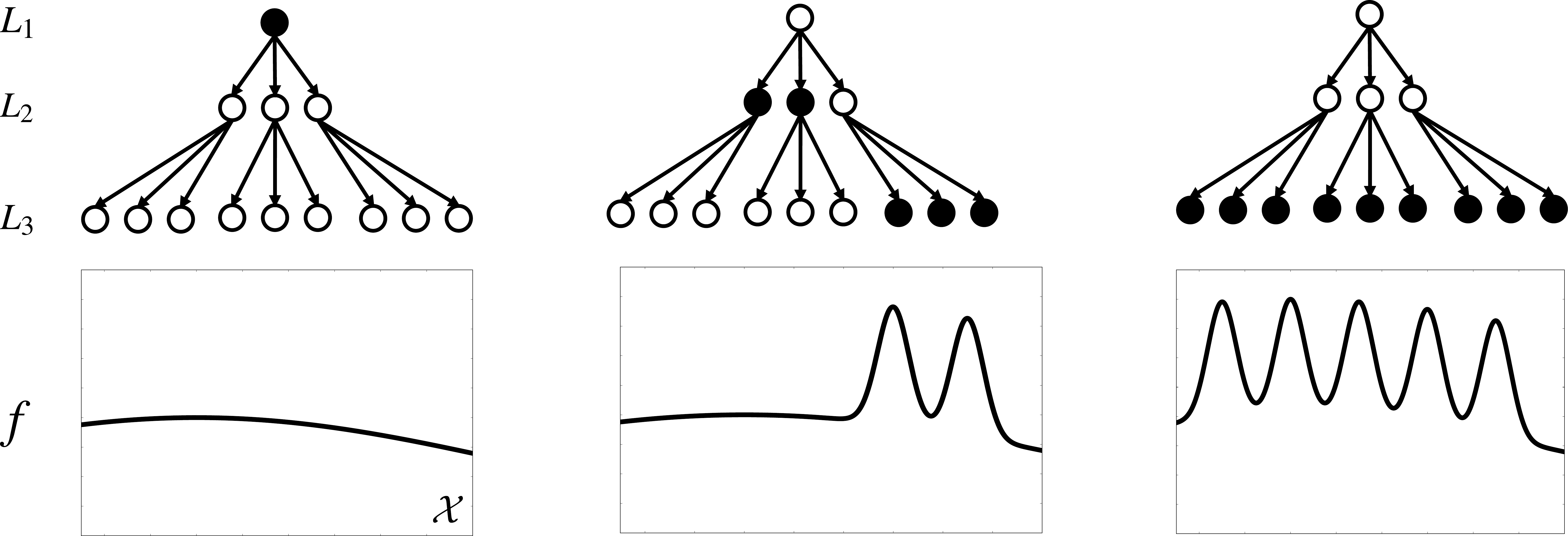}
  \caption{Matching functions to prunings. Profiles of local smoothness correspond to prunings so that smoother functions are matched to smaller prunings.}
  \label{fig:trees}
\end{figure}

\begin{wrapfigure}{H}{0.5\textwidth}
  \centering
  \includegraphics[width=0.23\textwidth]{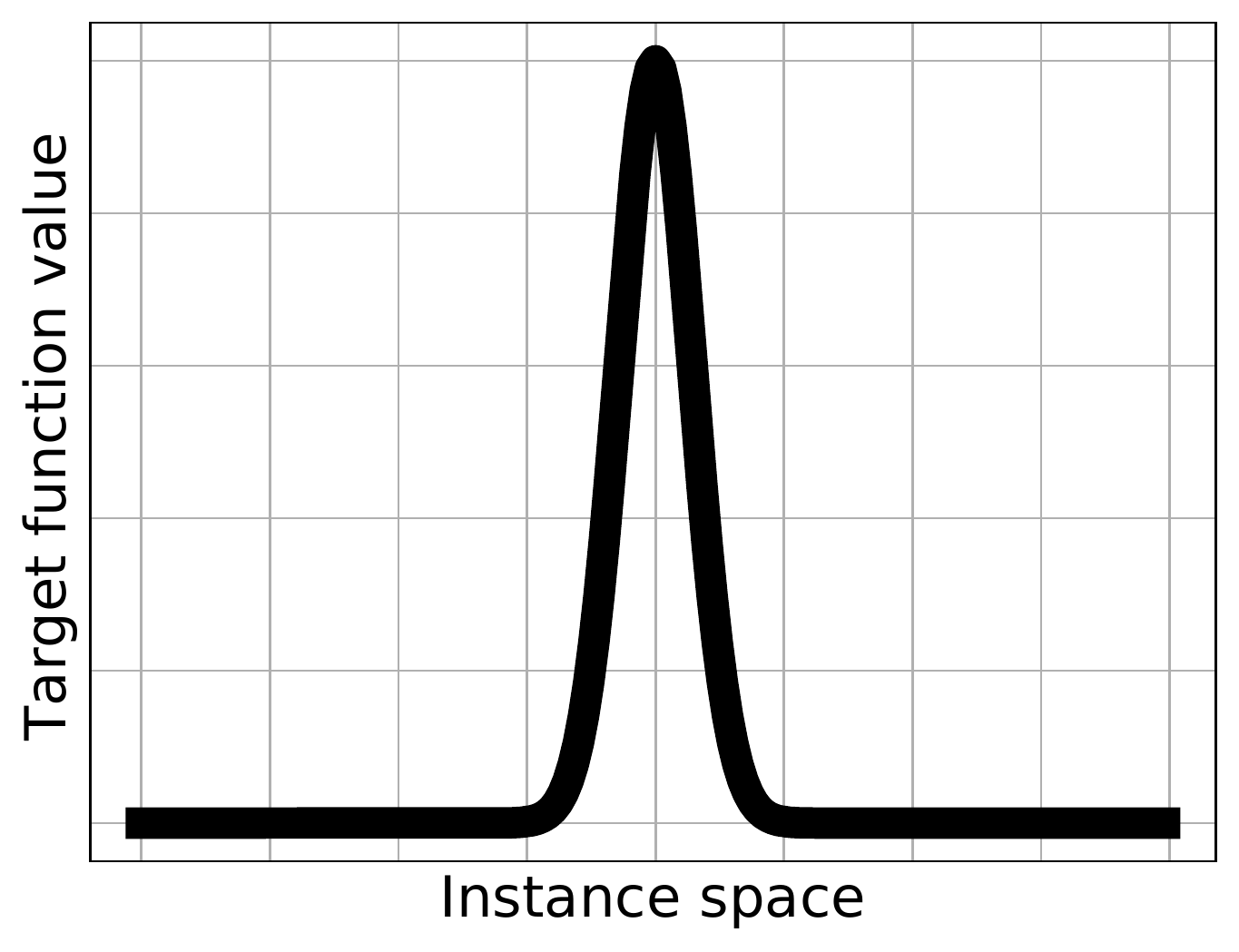}
  \includegraphics[width=0.23\textwidth]{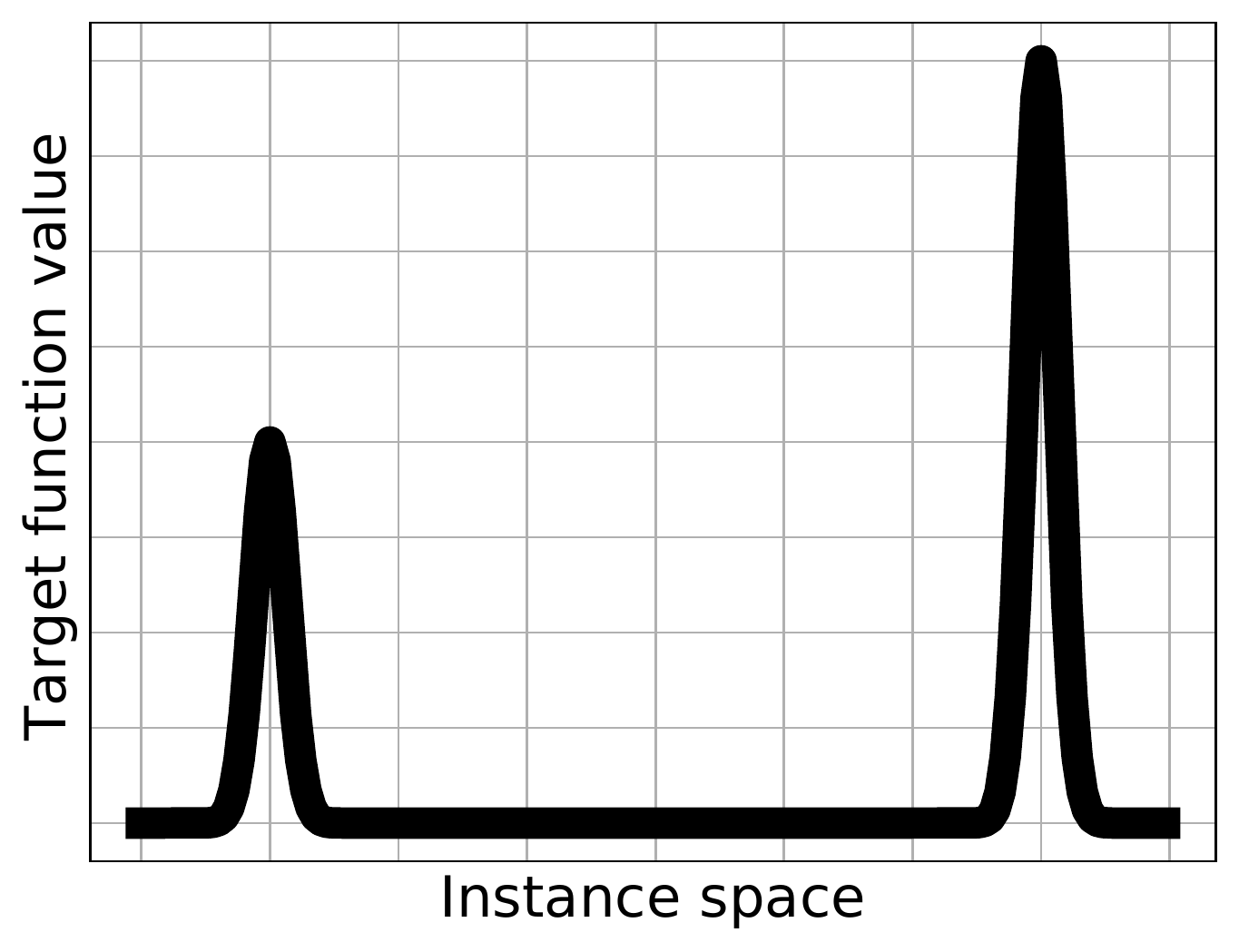}\\
  \includegraphics[width=0.23\textwidth]{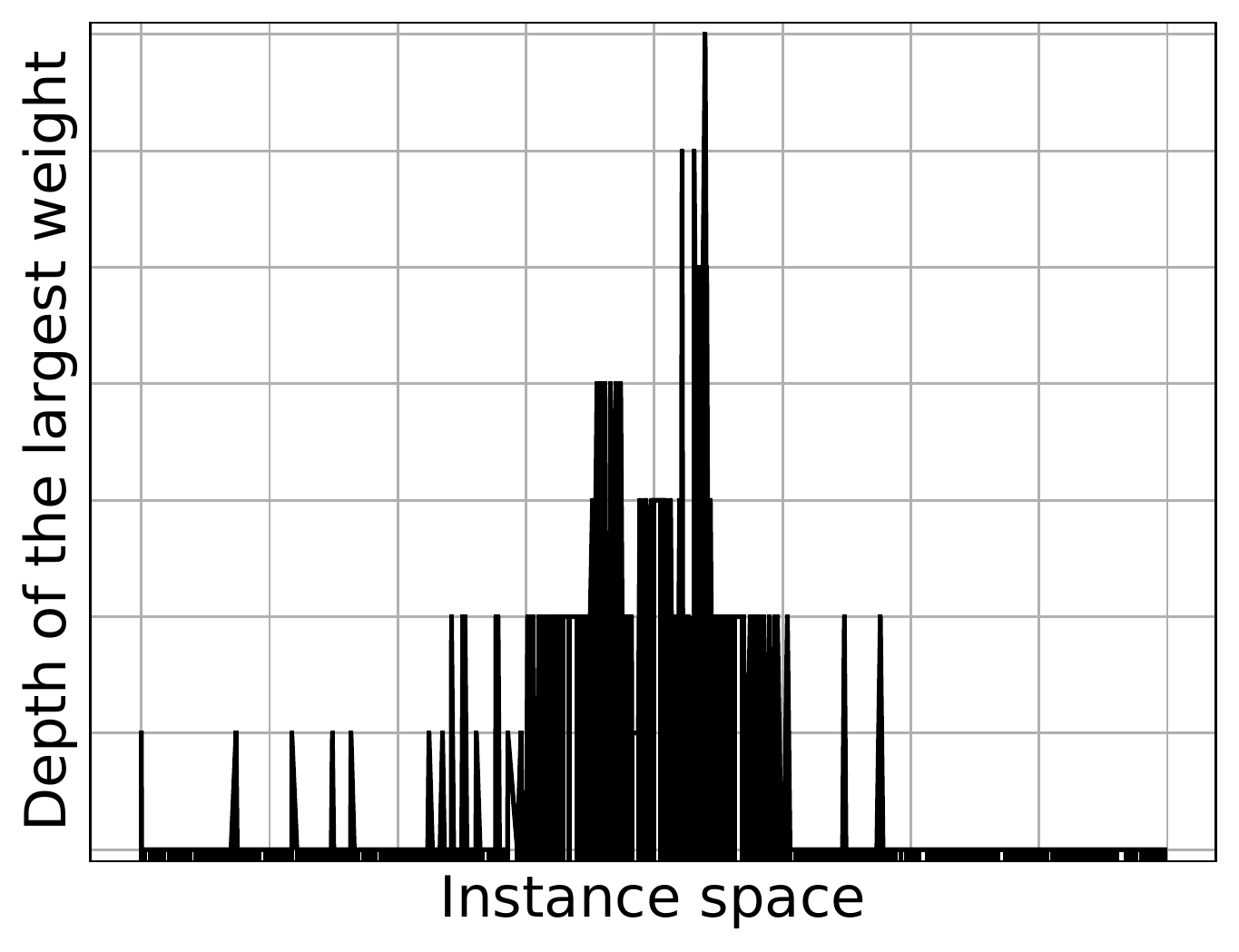}
  \includegraphics[width=0.23\textwidth]{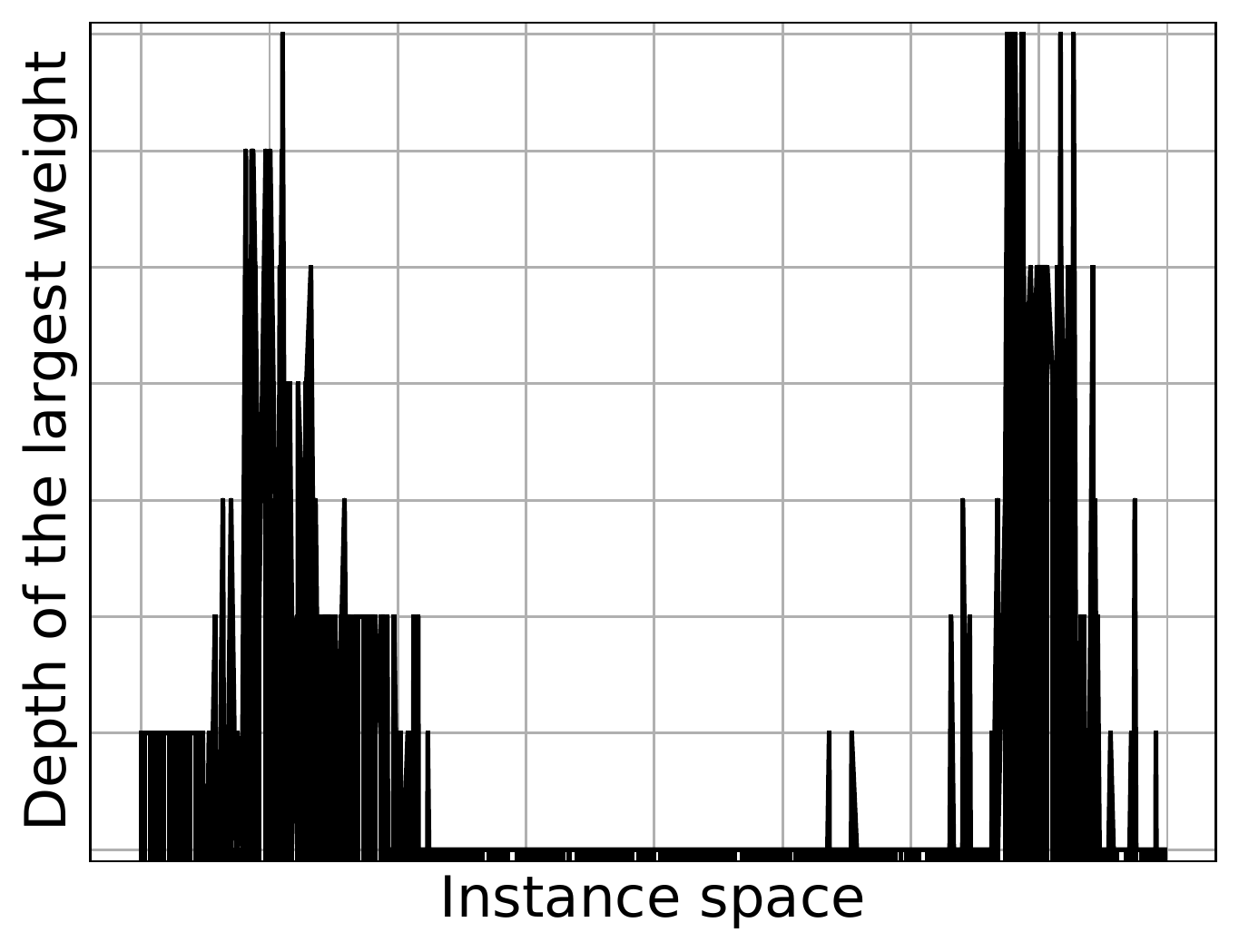}\\
  \includegraphics[width=0.23\textwidth]{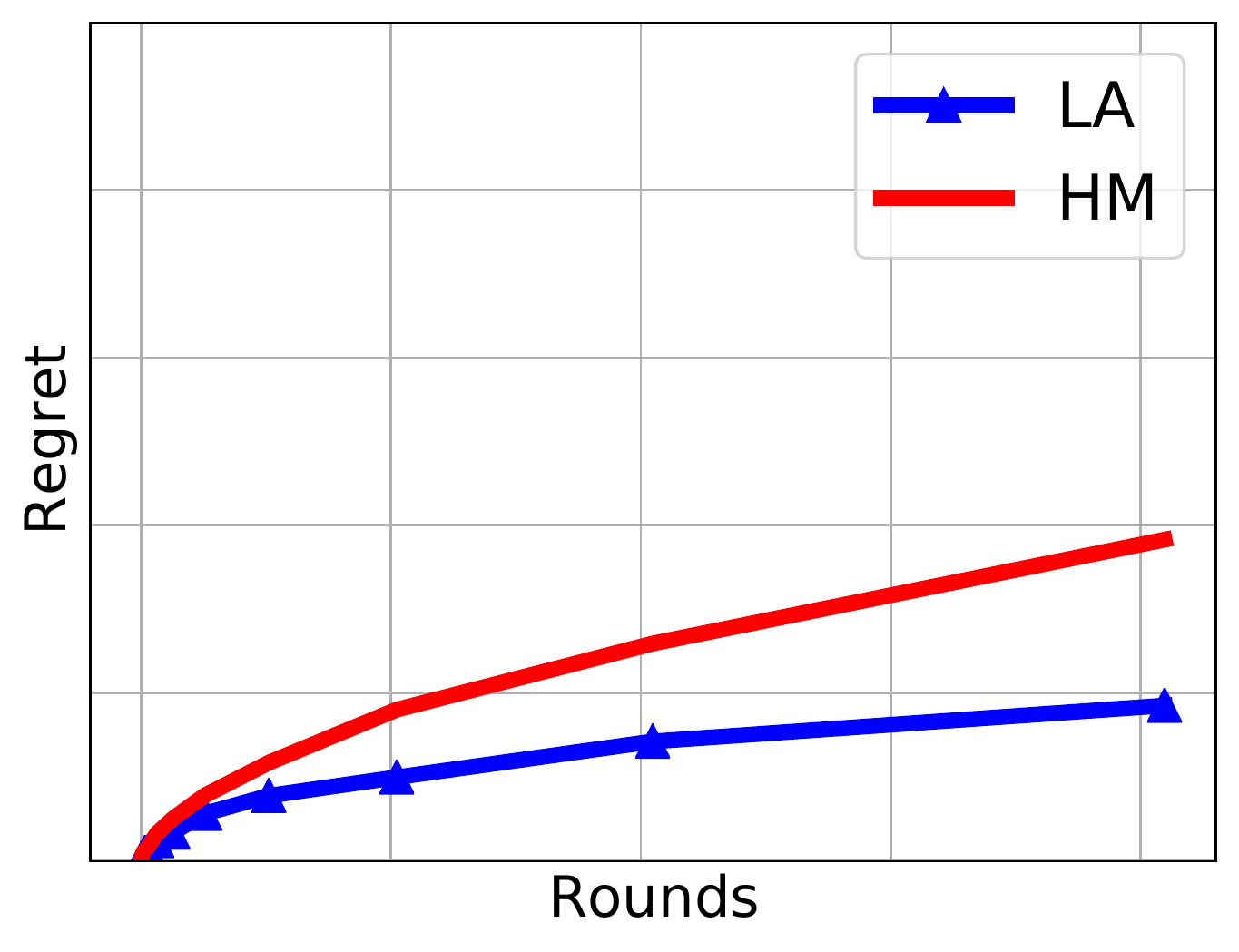}  
  \includegraphics[width=0.23\textwidth]{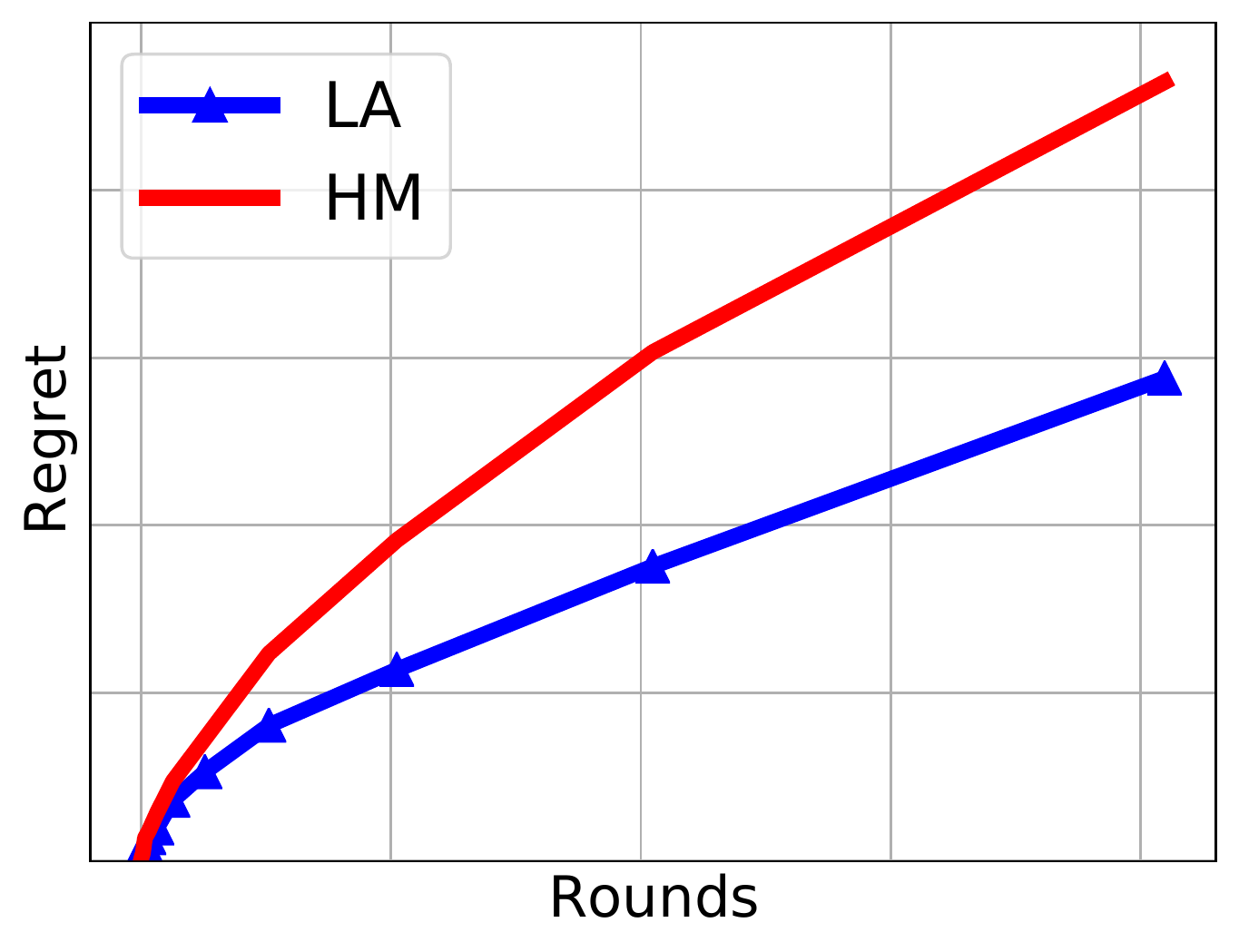}
  \caption{The first row shows two target functions with different Lipschitz profiles.
  The second row shows the best pruning found by our algorithm, expressed using the depth of the largest weights along each tree-path.
    The last row show the regret of our algorithm (\texttt{LA}) compared to that of \HM, which is given the true Lipschitz constant.
  }
  \label{fig:simulation}
  \end{wrapfigure}
If we view the hierarchical net as a $D$-level tree whose nodes are the balls in the net at each level, then the local Lipschitzness profile of a function $f$ translates into a pruning of this tree (this is visually explained in Figure~\ref{fig:trees}). By training a local predictor in each ball, we can use the leaves of a pruning $E$ to approximate a function whose local Lipschitz profile ``matches'' $E$. Namely, a function that satisfies~(\ref{eq:lip}) with $L = L_k$ for all observed instances $\bx,\bx'$ that belong to some leaf of $E$ at level $k$, for all levels $k$ (since $E$ is a pruning of the hierarchical net $\sT$, there is a one-to-one mapping between instances $\bx_t$ and leaves of $E$). Because our algorithm is simultaneously competitive against \textsl{all} prunings, it is also competitive against all functions whose local Lipschitz profile ---with respect to the instance sequence--- is matched by some pruning. More specifically, we prove that for any $f\in\sF_L$ and for any pruning $E$ matching $f$ on the sequence $\bx_1,\dots,\bx_T$ of instances,
\begin{equation}
\label{eq:ll_regret}
  R_T(f) \defOtilde \E\Big[L_{K}^{\frac{d}{d+1}}\Big] T^{\frac{d}{d+1}} + \sum_{k=1}^D \pr{ L_k \, T_{E,k} }^{\frac{d}{d+1}}
\end{equation}
where, from now on, $T_{E,k}$ always denotes the total number of time steps $t$ in which the current instance $\bx_t$ belongs to a leaf at level $k$ of the pruning $E$. The expectation is with respect to the random variable $K$ that takes value $k$ with probability equal to the fraction of leaves of $E$ at level $k$. The first term in the right-hand side of~\eqref{eq:ll_regret} bounds the estimation error, and is large when most of the \textsl{leaves} of $E$ reside at deep levels (i.e., $f$ has just a few regions of low variation). The second term bounds the approximation error, and is large whenever most of the \textsl{instances} $\bx_t$ belongs to leaves of $E$ at deep levels.

In order to compare this bound to~\eqref{eq:hazanBound}, consider $L_k = 2^k$ with $L = L_D = 2^D$. If $f$ is matched by some pruning $E$ such that most instances $\bx_t$ belong to shallow leaves of $E$, then our bound on $R_T(f)$ becomes of order $T^{{d}/({d+1})}$, as opposed to the bound of~\eqref{eq:hazanBound} which is of order $(2^DT)^{{d}/({d+1})}$. On the other hand, for any $f\in\sF_L$ we have at least a pruning matching the function: the one whose leaves are all at the deepest level of tree. In this case, our bound on $R_T(f)$ becomes of order $(2^DT)^{{d}/({d+1})}$, which is asymptotically equivalent to~\eqref{eq:hazanBound}. This shows that, up to log factors, our bound is never worse than~\eqref{eq:hazanBound}, and can be much better in certain cases.
Figure~\ref{fig:simulation} shows this empirically in a toy one-dimensional case.

Our locally adaptive approach can be generalized beyond Lipschitzness. Next, we present two additional contributions where we show that variants of our algorithm can be made adaptive with respect to different local properties of the problem.

\paragraph{Local dimension.}
It is well known that nonparametric regret bounds inevitably depend exponentially on the metric dimension of the set of data points \citep{hazan2007online,rakhlin2015online}. Similarly to local Lipschitzness, we want to take advantage of cases in which most of the data points live on manifolds that locally have a low metric dimension.
In order to achieve a dependence on the ``local dimension profile'' in the regret bound, we propose a slight modification of our algorithm, where each level $k$ of the hierarchical $\ve$-net is associated with a local dimension bound $d_k$ such that $d = d_1 > \cdots > d_D$.
Note that ---unlike local Lipschitzness--- the local dimension is \emph{decreasing} as the tree gets deeper.
This happens because higher-dimensional balls occupy a larger volume than lower-dimensional ones with the same radius, and so they occur at shallower levels of the tree.

We say that a pruning of the tree associated with the hierarchical $\ve$-net matches a sequence $\bx_1,\ldots,\bx_T$ of instances if the number of leaves of the pruning at each level $k$ is $\scO\big((L\, T)^{d_k/(1+d_k)}\big)$. For regression with square loss we can prove that, for any $f\in\sF_L$ and for any pruning $E$ matching $\bx_1,\ldots,\bx_T$, this modified algorithm achieves regret
\begin{equation}
\label{eq:regret_dimension}
R_T(f) \defOtilde \E\Big[(L\, T)^{\frac{d_{K}}{1+d_{K}}}\Big]
+ \sum_{k=1}^{D} (L\, T_{E,k})^{\frac{d_k}{1+d_k}}
\end{equation}
where, as before, the expectation is with respect to the random variable $K$ that takes value $k$ with probability equal to the fraction of leaves of $E$ at level $k$. If most $\bx_t$ lie in a low-dimensional manifold of $\sX$, so that $\bx_1,\ldots,\bx_T$ is matched by some pruning $E$ with deeper leaves, we obtain a regret of order
$
(L\, T)^{{d_D}/({1 + d_D})}
$.
This is nearly a parametric rate whenever $d_D \ll d$. In the worst case, when all instances are concentrated at the top level of the tree, we still recover~\eqref{eq:hazanBound}.

\paragraph{Local loss bounds.} Whereas the local Lipschitz profile measures a property of a function with respect to an instance sequence, and the local dimension profile measures a property of the instance sequence, we now consider the \textsl{local loss profile}, which measures a property of a base online learner with respect to a sequence of examples $(\bx_t,y_t)$. The local loss profile describes how the cumulative losses of the local learners at each node (which are instances of the base online learner) change across different regions of the instance space. To this end, we introduce the functions $\tau_k$, which upper bound the total loss incurred by the local learners at level $k$. We can use the local learners on the leaves of a pruning $E$ to predict a sequence of examples whose local loss profile matches that of $E$. By matching we mean that the local learners run on the subsequence of examples $(\bx_t,y_t)$ belonging to leaves at level $k$ of $E$ incur a total loss bounded by $\tau_k(T_{E,k})$, for all levels $k$. In order to take advantage of good local loss profiles, we focus on losses ---such as the absolute loss--- for which we can prove ``first-order'' regret bounds that scale with the loss of the expert against which the regret is measured. For the absolute loss, the algorithm we consider attains regret
\begin{equation}
  \label{eq:local_loss_regret}
  R_T(f) \defO \E\Big[(L\, \tau_K(T))^{\frac{d}{d+2}}\Big]
  + \sum_{k=1}^D (L\, \tau_k(T_{E,k}))^{\frac{d+1}{d+2}}
  + \sqrt{\E\Big[(L\, \tau_K(T))^{\frac{d}{d+2}}\Big] \sum_{k=1}^D \tau_k(T_{E,k}) }
\end{equation}
for any $f\in\sF_L$, where ---as before--- the expectation is with respect to the random variable $K$ that takes value $k$ with probability equal to the fraction of leaves of $E$ at level $k$. For concreteness, set $\tau_k(n) = n^{\frac{1}{D-k+1}}$, so that deeper levels $k$ correspond to loss rates that grow faster with time. When $E$ has shallow leaves and $T_{E,k}$ is negligible for $k > 1$, the regret becomes of order $(L\,T^{\frac{1}{D}})^{\frac{d+1}{d+2}}$,
which has significantly better dependence on $T$ than
$L^{\frac{d}{d+2}} T^{\frac{d+1}{d+2}}$
achieved by \HM. Note that we always have a pruning matching all sequences: the one whose leaves are all at the deepest level of the tree. Indeed, $\tau_D(n) = n$ is a trivial upper bound on the absolute loss of any online local learner. In this case, our bound on $R_T(f)$ becomes of order $(L\, T)^{\frac{d+1}{d+2}}$, which is asymptotically equivalent in $T$ compared to~\eqref{eq:hazanBound}.
Note that our dependence on the Lipschitz constant is slightly worse than~\eqref{eq:hazanBound}. This happens because
we have to pay an extra constant term for the regret in each ball, which is unavoidable in any first-order regret bound.

\paragraph{Intuition about the proof.}
\HM\ greedily constructs a net on the instance space, where each node hosts a local online learner and the label for a new instance is predicted by the learner in the nearest node. Balls shrinking at polynomial rate are centered on each node, and a new node is created at an instance whenever that instance falls outside the union of all current balls. The algorithms we present here generalize this approach to a hierarchical construction of $\ve$-nets at multiple levels. Each ball at a given level contains a lower-level $\ve$-net using balls of smaller radius, and we view this nested structure of nets as a tree. Radii are now tuned not only with respect to time, but also with respect to the level $k$, where the dependence on $k$ is characterized by the specific locality setting (i.e., local smoothness, local dimension, or local losses). The main novelty of our proof is the fact that we analyze \HM\ in a level-wise manner, while simultaneously competing against the best pruning over the entire hierarchy. Our approach is adaptive because the regret now depends on both the number of leaves of the best pruning and on the number of observations made by the pruning at each level. In other words, if the best pruning has no leaves at a particular level, or is active for just a few time steps at that level, then the algorithm will seldom use the local learners hosted at that level.

Our main algorithmic technology is the sleeping experts framework from~\citet{freund1997using}, where the local learner at each node is viewed as an expert, and active (non-sleeping) experts at a given time step are those along the root-to-leaf path associated with the current instance. For regression with square loss, we use exponential weights (up to re-normalization due to active experts).
For classification with absolute loss, we avoid the tuning problem by resorting to a parameter-free algorithm (specifically, we use AdaNormalHedge of~\citet{luo2015achieving} although other approaches could work as well).
This makes our approach computationally efficient: despite the exponential number of experts in the comparison class, we only pay in the regret a factor corresponding to the depth of the tree.

All omitted proofs can be found in the supplementary material.
\section{Definitions}
Throughout the paper, we assume instances $\bx_t$ have a bounded arbitrary norm, $\norm{\bx_t} \le 1$, so that $\sX$ is the unit ball with center in $\bzero$.
We use $\sB(\bz,r)$ to denote the ball of center $\bz\in\R^d$ and radius $r > 0$, and we write $\sB(r)$ instead of $\sB(\bzero,r)$.
\begin{definition}[Coverings and packings]
An $\ve$-cover of a set $\sX_0\ss\sX$ is a subset $\cbr{\bx'_1, \ldots, \bx'_n} \subset \sX_0$
such that for each $\bx \in \sX_0$ there exists $i \in \{1,\ldots,n\}$ such that $\norm{\bx-\bx'_i} \leq \ve$.
An $\ve$-packing of a set $\sX_0 \ss \sX$ is a subset $\cbr{\bx'_1, \ldots, \bx'_m} \subset \sX_0$ such that for any distinct $i,j \in \{1,\ldots,m\}$, $\|\bx'_i-\bx'_j\| > \ve$.
An $\ve$-net of a set $\sX_0 \ss \sX$ is any set of points in $\sX_0$ which is both an $\ve$-cover and an $\ve$-packing.
\end{definition}
\begin{definition}[Metric dimension]
  \label{def:metric_dimension}
  A set $\sX$ has metric dimension $d$ if there exists
  $C>0$
  such that, for all $\ve > 0$, $\sX$ has an $\ve$-cover of size at most $C\,\ve^{-d}$.
\end{definition}
We consider the following online learning protocol with oblivious adversary.
Given an unknown sequence $(\bx_1,y_1),(\bx_2,y_2),\ldots \in \sX\times\sY$ of instances and labels, for every round $t = 1,2,\dots$
\begin{enumerate}[topsep=0pt,parsep=0pt,itemsep=0pt]
\item The environment reveals the instance $\bx_t \in \sX$.
\item The learner selects an action $\yhat_t \in \sY$ and incurs the loss $\ell\big(\yhat_t, y_t)$.
\item The learner observes $y_t$.
\end{enumerate}
In the rest of the paper, we use $\ell_t(\yhat_t)$ as an abbreviation for $\ell\big(\yhat_t, y_t)$.

\textbf{Hierarchical nets, trees, and prunings.}
A \textsl{pruning} of a rooted tree is the tree obtained after the application of zero or more replace operations, where each replace operation deletes the subtree rooted at an internal node without deleting the node itself (which becomes a leaf).

Recall that our algorithms work by sequentially building a hierarchical net of the instance sequence. This tree-like structure is defined as follows.
\begin{definition}[Hierarchical net]
  A \textsl{hierarchical net} of depth $D$ of an instance sequence $\seq_T = \pr{\bx_1, \ldots, \bx_T}$ is a sequence of nonempty subsets\footnote{
  Here the net $S_k$ is defined using the indices $s$ of the points $\bx_s$.
  }
  $S_1 \subset\cdots\subset S_D \subseteq \cbr{1,\ldots,T}$ and radii $\ve_1 > \cdots > \ve_D > 0$ satisfying the following property: For each level $k=1,\ldots,D$,
  the set $S_k$ is a $\ve_k$-net of the elements of $\seq_T$ with balls $\cbr{\sB(\bx_s,\ve_k)}_{s \in S_k}$.
\end{definition}
\begin{figure}[H]
  \centering
  \includegraphics[width=0.5\textwidth]{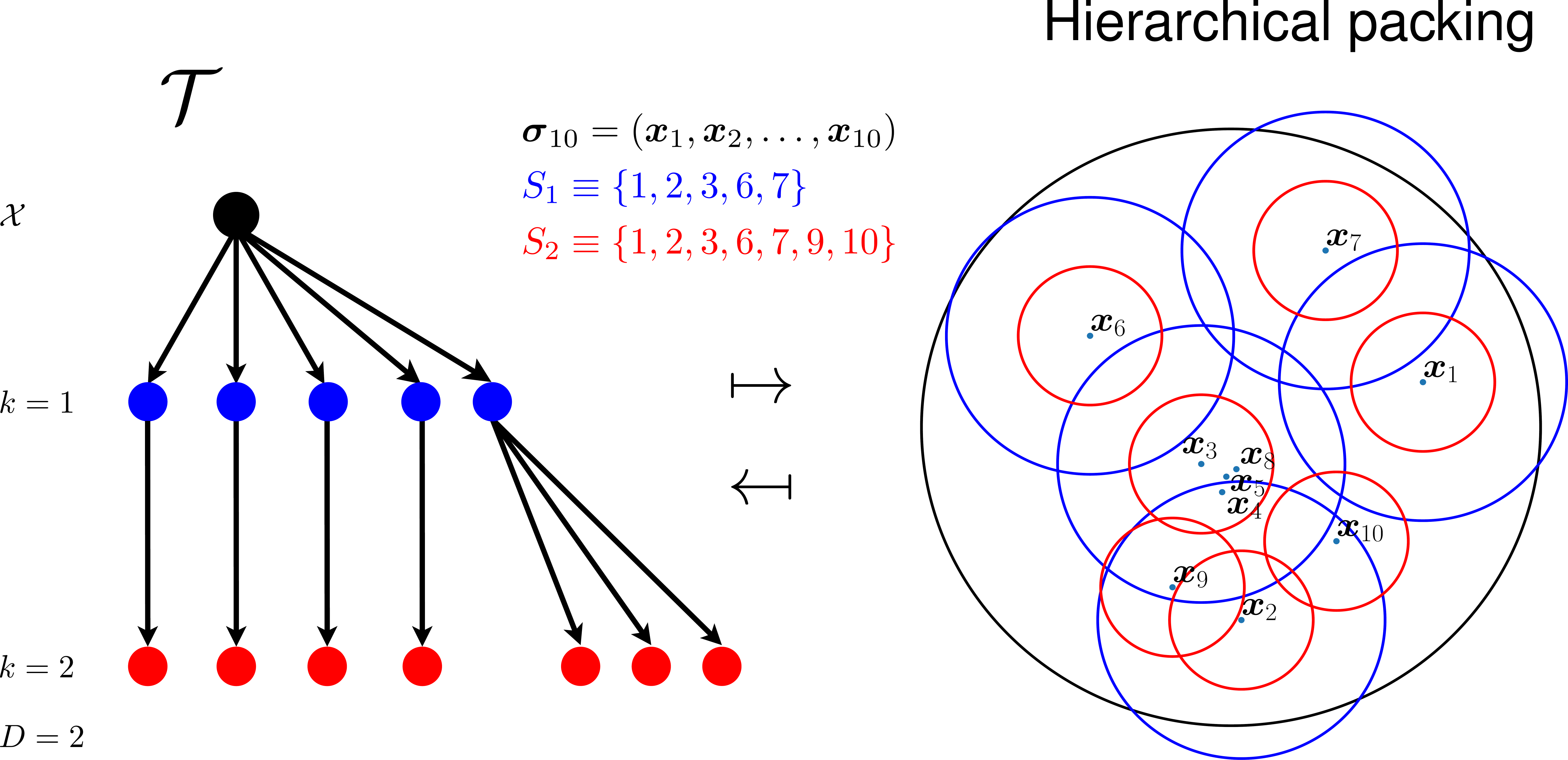}
  \includegraphics[width=0.4\textwidth]{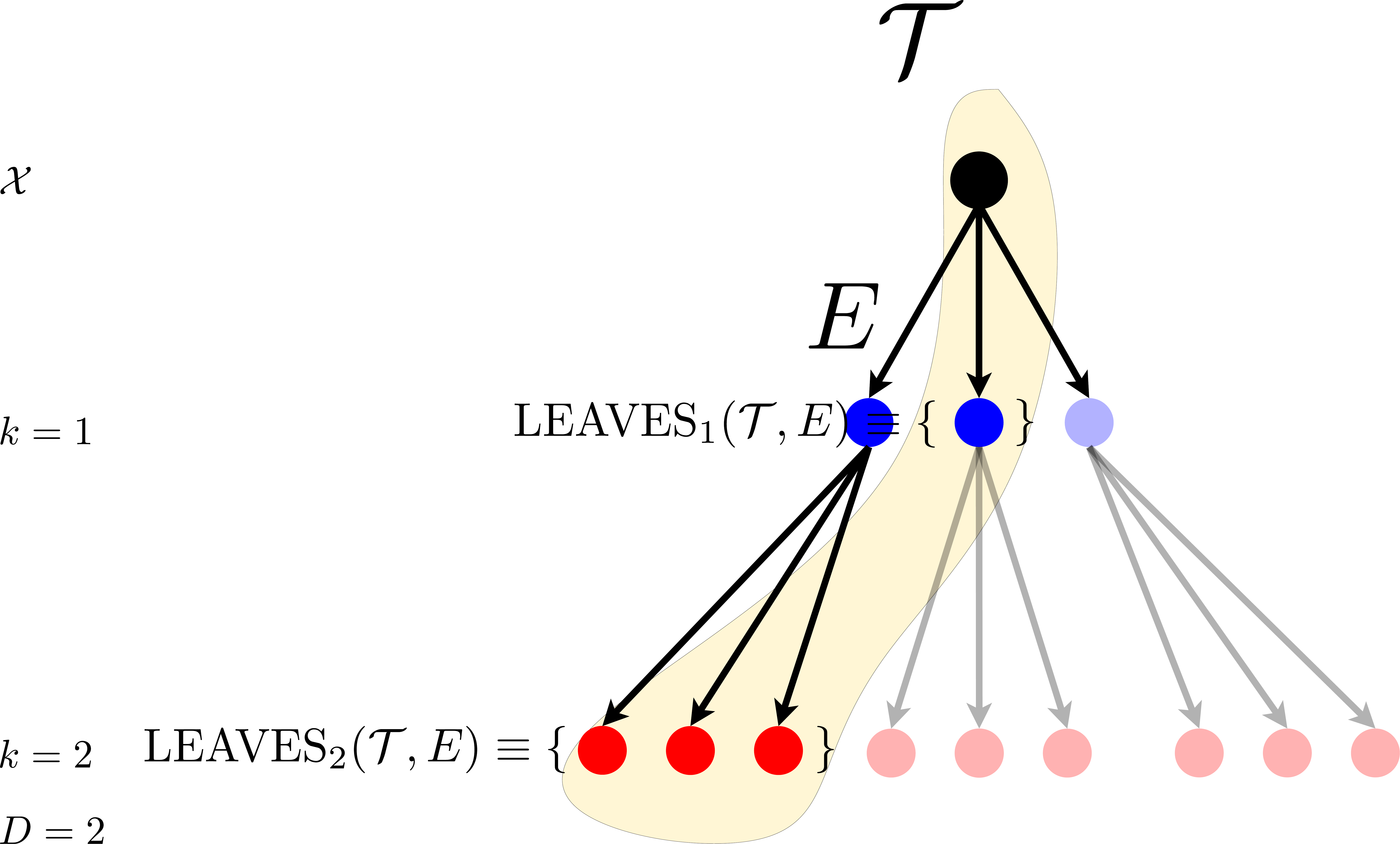}
  \caption{An example of mapping between tree $\sT$ and a hierarchical packing of some sequence $\bsigma_{10}$ (left) and pruning $E$ of a tree $\sT$ (right).}
  \label{fig:tree_packing_mapping}
\end{figure}
Any such hierarchical net can be viewed as a rooted tree $\sT$ (conventionally, the root of the tree is the unit ball $\sX$, i.e., $S_0 = \cbr{0}, \bx_0 = \bzero$ and $\ve_0 = 1$) defined by the parent function, where $\bx_s = \parent(\bx_t)$,
if $\bx_t \in \sB(\bx_s,\ve_k)$ for $s \in S_k$ (if there are more $s$ such that $\bx_t \in \sB(\bx_s,\ve_k)$, then take the smallest one), while
$t \in S_{k+1}$ and $k = 0, 1,\ldots, D-1$ ---see Figure~\ref{fig:tree_packing_mapping} (left).
Given an instance sequence $\seq_T$, let $\bT(\seq_T)$ be the family of all trees $\sT$ of depth $D$ generated from $\seq_T$ by choosing the $\ve_k$-nets at each level in all possible ways given a fixed sequence $\{\ve_k\}_{k=1}^D$.
Given $\sT$ and a pruning $E$ of $\sT$, we use $\leaves_k(\sT,E)$ to denote the subset of $S_k$ containing the nodes of $\sT$ that correspond to leaves of $E$.
When $\sT$ is clear from the context, we abbreviate $\leaves_k(\sT,E)$ with $E_k$. For any fixed $\sT\in\bT(\seq_T)$ let also $|E| = |E_1|+\cdots+|E_D|$ be the number of leaves in $E$.

\section{Related work}
\label{sec:related}
In nonparametric prediction, a classical topic in statistics, one is interested in predicting well compared to the best function in a large class, which typically includes all functions with certain regularities. While standard approaches assume uniform regularity of the optimal function (such as Lipschitzness or H\"{o}lder continuity), local minimax rates for adaptive estimation have been studied for nearly thirty years \cite{brown1996constrained,efromovich1994adaptive,lepski1992problems} and several works have investigated nonparametric regression under local smoothness assumptions \cite{mammen1997locally,tibshirani2014adaptive}.

The nonstochastic setting of nonparametric prediction was investigated by~\cite{vovk2006metric,vovk2006online,vovk2007competing}, who analyzed the regret of algorithms against Lipschitz function classes with bounded metric entropy. Later, \cite{rakhlin2014online} used a non-constructive argument to establish minimax regret rates $T^{(d-1)/d}$ (when $d > 2$) for both square and absolute loss. Inspired by their work, \cite{gaillard2015chaining} devised the first online algorithms for nonparametric regression enjoying minimax regret.
In this work, we employ a nested packing approach, which bears a superficial resemblance to the construction of~\cite{gaillard2015chaining} and to the analysis technique of~\cite{rakhlin2014online}.
However, the crucial difference is that we hierarchically cover the input space, rather than the function class, and use local no-regret learners within each element of the cover.
Our algorithm is conceptually similar to the one of~\cite{hazan2007online}, however their space packing can be viewed as a ``flat'' version of the one proposed here, while their analysis only holds for a known time horizon (see also \cite{kpotufe2013regression} for extensions).
Our algorithms adapt to the regularity of the problem in an online fashion using the tree-expert variant \cite{helmbold1997predicting} of prediction with expert advice ---see also \citep{cesa2006prediction}. In this setting, there is a tree-expert for each pruning of a complete tree with a given branching factor. Although the number of such prunings is exponential, predictions and updates can be performed in time linear in the tree depth $D$ using the context tree algorithm of~\citet{willems1995context}.
In this work, we consider a conceptually simpler version, which relies on \emph{sleeping experts}~\citep{freund1997using}.
The goal is to compete against the best pruning in hindsight, which typically requires knowledge of the pruning size for tuning purposes.
In case of prediction with absolute loss, we avoid the tuning problem by exploiting a parameter-free algorithm.
Local adaptivity to regularities of a competitor, as discussed in the current paper, can be also viewed as automatic parameter tuning through hierarchical expert advice.
A similar idea, albeit without the use of a hierarchy, was explored by~\cite{van2016metagrad} for automatic step size tuning in online convex optimization ---see~\citep{orabona2016coin} for a detailed discussion on the topic.
Adaptivity of $k$-NN regression and kernel regression to the local effective dimension of the stochastic data-generating process was studied by~\cite{kpotufe2011k,kpotufe2013adaptivity}, however they considered a notion of locality different from the one studied here.
The idea of adaptivity to the global effective dimension, combined with the net construction of~\cite{hazan2007online} in the online setting, were proposed by~\cite{kpotufe2013regression}.
\cite{kuzborskij2017nonparametric} investigated a stronger form of adaptivity to the dimension in nonparametric online learning.
Finally, adaptivity to local Lipschitzness was also explored in optimization literature \cite{mhammedi2019lipschitz,munos2011optimistic}.

\section{Description of the algorithm}
\label{sec:regression}
Recall that we identify a hierarchical net $S_1,\dots,S_D$ with a tree $\sT$ whose nodes correspond to the elements of the net.
Our algorithm predicts using a $\sT$ evolving with time, and competes against the best pruning of the tree corresponding to the final hierarchical net. A local online learner is associated with each node of $\sT$ except for the root. When a new instance $\bx_t$ is observed, it is matched with a center $\bx_s\in S_k$ at each level $k$ (which could be $\bx_t$ itself, if a new ball is created in the net) until a leaf is reached. The local learners associated with these centers output predictions, which are then aggregated using an algorithm for prediction with expert advice where the local learner at each node is viewed as an expert. Since only a fraction of experts (i.e., those associated with the matched centers, which form a path in a tree) are active at any given round, this can be viewed as an instance of the ``sleeping experts'' framework of \citet{freund1997using}. In the regression case, since the square loss is exp-concave for bounded predictions, we can directly apply the results of \citet{freund1997using}. In the classification case, instead, we use a parameter-free approach.

One might wonder whether our dynamically evolving net construction could be replaced by a fixed partition of the instance space chosen at the beginning. As this fixed partition would depend on the time horizon, we would need to use a cumbersome doubling trick to periodically re-start the algorithm from scratch. Moreover, identifying the elements of the partition could be computationally challenging for certain choices of the underlying metric. On the other hand, our algorithm is locally adaptive in any metric space, and does not require the knowledge of the time horizon. 

Algorithm~\ref{alg:template_ewa} contains the pseudocode for the case of exp-concave loss functions.
As input, the algorithm requires a radius-tuning function $\rho(k,t)$ which provides the radius of balls at level $k$ and time $t$ given the local regularity parameters (e.g., $L_1 < L_2 < \dots < L_D$). In the following, we consider specific application-dependent functions $\rho$.
\begin{algorithm}
  \caption{\label{alg:template_ewa}
    Locally Adaptive Online Learning (Hedge style)}
  \begin{algorithmic}[1]
    \Require{Depth parameter $D$, radius tuning function $\rho : \Nat \times \Nat \mapsto \reals$}
    \State $S_1 \gets \varnothing, \ldots, S_D \gets \varnothing$ \Comment{Centers at each level} \label{line:init}
    \For{each round $t = 1,2, \ldots$}
    \State \text{Receive} $\bx_t$ \Comment{\textbf{Prediction}} \label{line:prediction}
    \State ${\dt \big(\bpi_t,\widehat{\by}_t\big) \gets}$ \texttt{propagate}($\bx_t,t$) \Comment{Subroutine~\ref{alg:propagate}} \label{line:prop}
    \State ${\dt W_{t-1} \gets \sum_{\bv \sqsubseteq \bpi_t} w_{\bv,t-1} }$ \label{line:wsum}
    \State \text{Predict} ${\dt \yhat_t \gets \frac{1}{W_{t-1}} \sum_{\bv \sqsubseteq \bpi_t} w_{\bv,t-1} \yhat_{\bv, t} }$ \label{line:pred}
    \State Observe $y_t$ \Comment{\textbf{Update}} \label{line:obs}
    \State \texttt{update}($\bpi_t,\bx_t,y_t$) \label{line:upd}
    \State ${\dt Z_{t-1} \gets \frac{1}{W_{t-1}} \sum_{\bv \sqsubseteq \bpi_t} w_{\bv,t-1} e^{-\frac{1}{2} \ell_t(\yhat_{\bv,t})} }$ \label{line:updw1}
    \State \text{For each} $\bv \sqsubseteq \bpi_t$, ${\dt\, w_{\bv,t} \gets \frac{1}{Z_{t-1}} w_{\bv,t-1} e^{-\frac{1}{2} \ell_t(\yhat_{\bv,t})} }$ \label{line:updw2}
    \EndFor
  \end{algorithmic}
\end{algorithm}

The algorithm invokes two subroutines \texttt{propagate} and \texttt{update}. The former collects the predictions of the local learners along the path of active experts corresponding to an incoming instance, allocating new balls whenever necessary; the latter updates the active experts.
We use $\bpi_t$ to denote the root-to-leaf path in $\sT$ of active experts associated with the current instance $\bx_t$.
The vector $\bpi_t$ is built by the subroutine \texttt{propagate} along with the vector $\widehat{\by}_t$ of their predictions. Both these vectors are then returned to the algorithm (line~\ref{line:prop}).
The sum $W_{t-1}$ of the current weight $w_{\bv,t-1}$ of each active expert on the path $\bpi_t$ is computed in line~\ref{line:wsum}, where $\bv\sqsubseteq\bpi_t$ is used to denote a node in $\sT$ whose path is a prefix of $\bpi_t$. This sum is used to compute the aggregated prediction on line~\ref{line:pred}. After observing the true label $y_t$ (line~\ref{line:obs}), the subroutine \texttt{update} updates the active experts in $\bpi_t$. Finally, the weights of the active experts are updated (lines~\ref{line:updw1} and~\ref{line:updw2}).

We now describe a concrete implementation of \texttt{propagate} which will be used in Section~\ref{sec:nonparametric}. For simplicity, we assume that all variables of the meta-algorithm which are not explicitly given as input values are visible.
The subroutine \texttt{propagate} finds in a tree $\sT$ the path of active experts associated with an instance $\bx_t$.%
When invoked at time $t=1$, the tree is created as a list of nested balls with common center $\bx_1$ and radii $\ve_{k,1}$ for $k=1,\dots,D$ (lines~\ref{line:create}--\ref{line:addPred}). For all $t > 1$, starting from the root node set as parent node (line~\ref{line:start}), the procedure finds in each level $k$ the center $\bx_s$ closest to the current instance $\bx_t$ among those centers which belong to the parent node (line~\ref{line:find}). Note that the parent node is a ball and therefore there is at least one center in $\sB_{\parent}$. If $\bx_t$ is in the ball with center $\bx_s$, then the predictor located at $\bx_s$ becomes active (line~\ref{line:closest}). Otherwise, a new ball with center $\bx_t$ is created in the net at that level, and a new active predictor is associated with that ball (line~\ref{line:create-upd}).
\begin{algorithm}
\floatname{algorithm}{Subroutine}
  \caption{\label{alg:propagate} \texttt{propagate}.} 
  \begin{algorithmic}[1]
    \Require{instance $\bx_t \in \sX$, time step index $t$}
    \State $\sB_{\parent} \gets \sX$ \Comment{Start from root} \label{line:start}
    \For{depth $k = 1,\ldots,D$}
    \If{$S_k \equiv \varnothing$}
    \State $S_k \gets \cbr{t}$\Comment{Create initial ball at depth $k$} \label{line:create}
    \State Create predictor at $\bx_t$ \label{line:addPred}
    \EndIf
    \State $\ve \gets \rho(k,t)$ \Comment{Get current radius}
    \State
    ${\dt s \gets \argmin_{i \in S_k,\, \bx_i \in \sB_{\parent}} \norm{\bx_i-\bx_t} }$ \Comment{Find closest expert at level $k$} \label{line:find}
    \If{$\norm{\bx_t - \bx_s} \leq \ve$}
    \State $\pi_k \gets s$ \label{line:closest} \Comment{Closest expert becomes active and is added to path}
    \Else
    \State $S_k \gets S_k \cup \cbr{t}$ \label{eq:alg:update_add_new_center_1} \Comment{Add new center to level $k$}
    \State Create predictor at $\bx_t$ \label{line:create-upd}
    \State $\pi_k \gets t$ \Comment{New expert becomes active and is added to path}
    \EndIf
    \State $\yhat_{\pi_k,t} \gets $ prediction of active expert \Comment{Add prediction to prediction vector}
    \State $\sB_{\parent} \gets \sB(\bx_s,\ve)$ \Comment{Set ball of active expert as current element in the net}
    \EndFor
    \Ensure{path $\bpi$ of active experts and vector $\widehat{\by}$ of active expert predictions}
  \end{algorithmic}
\end{algorithm}
The indices of active predictors are collected in a vector $\bpi$, while their predictions are stored in a vector $\widehat{\by}$ and then aggregated using Algorithm~\ref{alg:template_ewa}.
We use $T_i$ to denote the subset of time steps on which the expert at node $i$ is active. These are the $t\in\{1,\ldots,T\}$ such that $i$ occurs in $\bpi_t$.

\section{Applications}
\label{sec:nonparametric}
\textbf{Local Lipschitzness.} We first consider the case of local Lipschitz bounds for regression with square loss $\ell_t(\yhat) = \frac{1}{2} \pr{y_t - \yhat}^2$, where $y_t \in [0,1]$ for all $t\ge 1$. Here we use Follow-the-Leader (FTL) as local online predictor. As explained in the introduction, we need to match prunings to functions with certain local Lipschitz profiles. This is implemented by the following definition.
\begin{definition}[Functions admissible with respect to a pruning]
\label{def:F_local}
Given $0 < L_1 < \cdots < L_D$, a hierarchical net $\sT \in \bT(\seq_T)$ of an instance sequence $\seq_T$, and a time-dependent radius tuning function $\rho$, we define the set of admissible functions with respect to a pruning $E$ of $\sT$ by
\begin{align*}
	\sF(E,\sT)
\equiv \Big\{ f : \sX \to [0,1]
\>\Big|\>
	& \forall \bx \in \sB\big(\bx_i, \rho(k,t)\big),\, \forall i \in \leaves_k(\sT,E)
\\&
	\big|f(\bx_i) - f(\bx)\big| \leq L_k\,\rho(k,t),
\quad
	k = 1,\ldots,D,\, t=1,\ldots,T
\Big\}~.
\end{align*}
\end{definition}
Now we establish a regret bound with respect to admissible functions. Recall that $T_{E,k}$ is the total number of time steps $t$ in which the current instance $\bx_t$ belongs to a leaf at level $k$ of the pruning $E$.
\begin{theorem}
\label{thm:ll_regression_regret}
Given $0 < L_1 < \cdots < L_D$, suppose that Algorithm~\ref{alg:template_ewa} using Subroutine~\ref{alg:propagate} is run for $T$ rounds with radius tuning function
$
\rho(k,t) = (L_k t)^{-\frac{1}{d+1}}
$, and let $\sT$ be the resulting hierarchical net. Then, for all prunings $E$ of $\sT$ the regret $R_T(f)$ satisfies
\begin{align*}
R_T(f) \defOtilde \E\br{L_K^{\frac{d}{d+1}}} T^{\frac{d}{d+1}} + \sum_{k=1}^D \pr{L_k T_{E,k}}^{\frac{d}{d+1}} \qquad \forall f \in \sF(E,\sT)~.
\end{align*}
\end{theorem}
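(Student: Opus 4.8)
Fix a pruning $E$ of $\sT$ and an admissible $f\in\sF(E,\sT)$. For each round $t$, let $i_E(t)$ be the unique leaf of $E$ lying on the active path $\bpi_t$: this is well defined because $E$ is a pruning of $\sT$, so every root-to-leaf path of $\sT$ meets the frontier of $E$ exactly once, and consequently the sets $\{T_i\}_{i\in E_1\cup\cdots\cup E_D}$ partition $\{1,\dots,T\}$. I would split the regret as $R_T(f)=A+B$, where $A=\sum_{t=1}^T\ell_t(\yhat_t)-\sum_{t=1}^T\ell_t(\yhat_{i_E(t),t})$ is the regret of the aggregation step (Algorithm~\ref{alg:template_ewa}) against the comparator that at each round predicts with the local learner sitting at $i_E(t)$, and $B=\sum_{k=1}^D\sum_{i\in E_k}\big(\sum_{t\in T_i}\ell_t(\yhat_{i,t})-\sum_{t\in T_i}\ell_t(f(\bx_t))\big)$ collects the per-leaf gaps between the local learners and $f$. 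The first term is the estimation error, the second the approximation error.

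\textbf{Bounding $A$.} The square loss restricted to $[0,1]$ is $\tfrac12$-exp-concave, which is exactly what the update $w_{\bv,t}\propto w_{\bv,t-1}e^{-\frac12\ell_t(\yhat_{\bv,t})}$ (normalized over the active path) exploits, so Algorithm~\ref{alg:template_ewa} is an instance of the sleeping-experts aggregation of \citet{freund1997using}: tracking the weight of any node $i$ and using that active weights are preserved by the re-normalization, one gets $\sum_{t\in T_i}\ell_t(\yhat_t)\le\sum_{t\in T_i}\ell_t(\yhat_{i,t})+\scO(\ln N)$, where $N$ is the number of nodes (the prior over possibly newly created experts costs only a $\log$ factor). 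Summing over the leaves of $E$, which partition the rounds, gives $A\defO|E|\ln N$. To reach the first term of the bound I would invoke a covering lemma stating $|S_k|\defO(L_k\,T)^{d/(d+1)}$; then $N\defO D\,(L_D\,T)^{d/(d+1)}$ so $\ln N=\sctilO(1)$, and $|E_k|\le|S_k|\defO(L_k\,T)^{d/(d+1)}$. Combining the latter with Cauchy--Schwarz, $|E|^2\le D\sum_k|E_k|^2\defO D\,T^{d/(d+1)}\sum_k|E_k|\,L_k^{d/(d+1)}=D\,T^{d/(d+1)}\,|E|\,\E[L_K^{d/(d+1)}]$, hence $|E|\defO D\,\E[L_K^{d/(d+1)}]\,T^{d/(d+1)}$ and therefore $A\defOtilde\E[L_K^{d/(d+1)}]\,T^{d/(d+1)}$.

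\textbf{Bounding $B$.} Each local learner is FTL on a $1$-strongly convex loss with gradients bounded by $1$, so at node $i$ it has regret $\scO(\ln|T_i|)$ against any fixed prediction, in particular against $f(\bx_i)$: $\sum_{t\in T_i}\ell_t(\yhat_{i,t})\le\sum_{t\in T_i}\ell_t(f(\bx_i))+\scO(\ln|T_i|)$. For $i\in E_k$ and $t\in T_i$ we have $\bx_t\in\sB(\bx_i,\rho(k,t))$ by construction, so admissibility (Definition~\ref{def:F_local}) yields $|f(\bx_i)-f(\bx_t)|\le L_k\,\rho(k,t)$, whence $\ell_t(f(\bx_i))-\ell_t(f(\bx_t))=\tfrac12\big(f(\bx_t)-f(\bx_i)\big)\big(2y_t-f(\bx_i)-f(\bx_t)\big)\le L_k\,\rho(k,t)$. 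Summing over $t\in T_i$, then over $i\in E_k$, then over $k$, and using that $\rho(k,t)=(L_kt)^{-1/(d+1)}$ is decreasing in $t$ while there are exactly $T_{E,k}$ rounds in which $\bx_t$ falls in a level-$k$ leaf of $E$, the approximation part is at most $\sum_k L_k\cdot L_k^{-1/(d+1)}\sum_{n=1}^{T_{E,k}}n^{-1/(d+1)}\defO\sum_k(L_k\,T_{E,k})^{d/(d+1)}$. Thus $B\defO|E|\ln T+\sum_{k=1}^D(L_k\,T_{E,k})^{d/(d+1)}$, and the $|E|\ln T$ term is absorbed into the first term of the bound via the estimate on $|E|$ from the previous paragraph. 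Adding $A$ and $B$ gives the claim.

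\textbf{Main obstacle.} The delicate point is the covering lemma $|S_k|\defO(L_k\,T)^{d/(d+1)}$. Because Subroutine~\ref{alg:propagate} restricts the nearest-center search to the current parent ball, the level-$k$ centers are \emph{not} globally $\rho(k,T)$-separated, so the naive packing argument does not apply. The fix is to count only the \emph{reachable} (non-orphaned) centers at a given time $t$: since for every fixed $t$ the child radius $\rho(k,t)$ is only a constant factor below the parent radius $\rho(k-1,t)=(L_{k-1}/L_k)^{1/(d+1)}\rho(k,t)$, each reachable level-$(k-1)$ parent has $\scO\big((L_k/L_{k-1})^{d/(d+1)}\big)$ reachable children, and inducting on $k$ bounds the number of reachable level-$k$ centers at time $t$ by $\scO\big((L_k\,t)^{d/(d+1)}\big)$ (the base case being the root ball of radius $1$). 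Summing the number of centers \emph{created} within each dyadic time window $[2^j,2^{j+1})$---each such count controlled by the reachable-center bound at the window's endpoints plus the orphanings, which are themselves bounded by the reachable count---turns this into $|S_k|\defO(L_k\,T)^{d/(d+1)}$. Carrying out this geometric bookkeeping carefully is the technical heart of the proof; the rest of the argument is routine.
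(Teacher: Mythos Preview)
Your decomposition $R_T(f)=A+B$, the handling of $A$ via the sleeping-experts bound, the FTL bound inside each leaf, and the approximation-error calculation $\sum_k L_k\sum_{n\le T_{E,k}}(L_k n)^{-1/(d+1)}\defO\sum_k(L_kT_{E,k})^{d/(d+1)}$ are all exactly what the paper does (Lemma~\ref{lem:nonparametric_master_lemma} and the proof of Theorem~\ref{thm:ll_regression_regret}).

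Where you diverge is in bounding $|E|$. You go through a bound on the full nets $|S_k|\defO(L_kT)^{d/(d+1)}$ and then Cauchy--Schwarz, which costs you an extra factor $D$ (not obviously a log factor, since $D$ is a free input parameter) and, more importantly, forces you into what you call the ``main obstacle''---controlling the size of $S_k$ despite the parent-restricted search. The paper bypasses all of this: Lemma~\ref{lem:bound_on_the_leaves} works directly with the \emph{pruning} $E$, never with $S_k$. A volumetric argument (level-$k$ leaves of $E$ can only pack the volume of $\sX$ not already occupied by the leaves of $E$ at shallower levels) yields, for every $k$,
\[
\sum_{s=1}^{k}\frac{|E_s|}{L_s^{d/(d+1)}}\le T^{d/(d+1)},
\]
and then the HM--AM inequality applied to the weights $|E_s|/|E|$ converts this into $|E|\le\E\big[L_K^{d/(d+1)}\big]\,T^{d/(d+1)}$ with no $D$ factor. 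This is both tighter and cleaner: because the argument only needs packing of the balls that are \emph{leaves of} $E$ (which the paper simply takes from the definition of hierarchical net), your reachable-centers and dyadic-window machinery is unnecessary on the paper's route. Your approach would still go through (modulo the $D$), but the HM--AM trick is the intended shortcut.
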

The expectation is understood with respect to the random variable $K$ that takes value $k$ with probability equal to the fraction of leaves of $E$ at level $k$.

The prunings $E$ and the admissible functions $\sF(E,\sT)$ depend on the structure of $\sT$. In turn, this structure depends on the instance sequence $\bsigma_T$ only (except for the analysis of local losses, where it also depends on the local learners). Importantly, \emph{the structure of} $\sT$, and therefore the comparator class used in our analyses, \emph{is not determined by the predictions of the algorithm}, a fact that would compromise the definition of regret.

\textbf{Local dimension.}
We now look at a different notion of adaptivity, and demonstrate that Algorithm~\ref{alg:template_ewa} is also capable of adapting to the \emph{local} dimension of the data sequence.
We consider a decreasing sequence $d=d_1 > \cdots > d_D$ of local dimension bounds, where $d_k$ is assigned to the level $k$ of the hierarchical net maintained by Algorithm~\ref{alg:template_ewa}. We also make a small modification to
Subroutine~\ref{alg:propagate}. Namely, we add a new center at level $k$ only if the designated size of the net (which depends on the local dimension bound) has not been exceeded. The modified subroutine is \texttt{propagateDim}.
%
%
\begin{algorithm}[H]
\floatname{algorithm}{Subroutine}
  \caption{\label{alg:dim_propagate} \texttt{propagateDim}.} 
  \begin{algorithmic}[1]
    \Require{instance $\bx_t \in \sX$, time step index $t$, $C$ (see Def.~\ref{def:metric_dimension})}
    \State $\sB_{\parent} \gets \sX$ \Comment{Start from root}
    \For{depth $k = 1,\ldots,D$}
    \If{$S_k \equiv \varnothing$}
    \State $S_k \gets \cbr{t}$\Comment{Create initial ball at depth $k$}
    \State Create predictor at $\bx_t$
    \EndIf
    \State $\ve \gets \rho(k,t)$ \Comment{Get current radius}
    \State
    ${\dt s \gets \argmin_{\substack{i \in S_k \\ \bx_i \in \sB_{\parent}}} \norm{\bx_i-\bx_t} }$ \Comment{Find closest expert at level $k$} 
    \If{$|S_k| > C\, \ve^{d_k}$ \textbf{or} $\norm{\bx_t - \bx_s} \leq \ve$} 
    \State $\pi_k \gets s$ \Comment{Closest expert becomes active and is added to path}
    \ElsIf{$|S_k| \le C\, \ve^{d_k}$} 
    \State $S_k \gets S_k \cup \cbr{t}$ \Comment{Add new center to level $k$}
    \State Create predictor at $\bx_t$
    \State $\pi_k \gets t$ \Comment{New expert becomes active and is added to path}
    \EndIf
    \State $\yhat_{\pi_k,t} \gets $ prediction of active expert \Comment{Add prediction to prediction vector}
    \State $\sB_{\parent} \gets \sB(\bx_s,\ve)$ \Comment{Set ball of active expert as current element in the net}
    \EndFor
    \Ensure{path $\bpi$ of active experts and vector $\widehat{\by}$ of active expert predictions}
  \end{algorithmic}
\end{algorithm}
Since the local dimension assumption is made on the instance sequence rather than on the function class, in this scenario we may afford to compete against the class $\sF_L$ of all $L$-Lipschitz functions, while we restrict the prunings to those that are compatible with the local dimension bounds w.r.t.\ the hierarchical net built by the algorithm.
\begin{definition}[Prunings admissible w.r.t.\ local dimension bounds]
\label{def:sE_dim}
Given $d=d_1 > \cdots > d_D$ and a hierarchical net $\sT \in \bT(\seq_T)$ of an instance sequence $\seq_T$, define the set of admissible prunings by
\[
    \sE_{\mathrm{dim}}(\sT)
\equiv
    \big\{ E \in \sT \,:\, \big|\leaves_k(\sT,E)\big| \leq C\, (L\, T)^{\frac{d_k}{1+d_k}},\; k = 1,\ldots,D  \big\}~.
\]
\end{definition}
\begin{theorem}
\label{thm:dim_regression_regret}
Given $d=d_1 > \cdots > d_D$, suppose that Algorithm~\ref{alg:template_ewa} using Subroutine~\ref{alg:propagate} is run for $T$ rounds with radius tuning function
$
\rho(k,t) = (L t)^{-\frac{1}{1 + d_k}}
$, and let $\sT$ the resulting hierarchical net. Then, for all prunings $E \in \sE_{\mathrm{dim}}(\sT)$ the regret satisfies
\begin{align*}
    R_T(f) \defOtilde \E\br{(L\, T)^{\frac{d_K}{1+d_K}}} + \sum_{k=1}^{D} (L\, T_{E,k})^{\frac{d_k}{1+d_k}} \qquad \forall f \in \sF_L~.
\end{align*}
\end{theorem}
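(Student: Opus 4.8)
The argument parallels that of Theorem~\ref{thm:ll_regression_regret}, the only changes being that each quantity attached to level $k$ now carries the exponent $\tfrac{d_k}{1+d_k}$ in place of $\tfrac{d}{d+1}$, and that the level-$k$ net size is controlled by the cap enforced in \texttt{propagateDim} rather than by an automatic packing bound. Fix $f\in\sF_L$ and an admissible pruning $E\in\sE_{\mathrm{dim}}(\sT)$. Because $E$ is a pruning of $\sT$ and $\sT$ routes every $\bx_t$ along a single root-to-leaf path $\bpi_t$, that path meets exactly one leaf of $E$; call it $i(t)$ and let $k(t)$ be its level. I would split
\[
R_T(f)=\underbrace{\sum_{t=1}^{T}\Big(\ell_t(\yhat_t)-\ell_t(\yhat_{i(t),t})\Big)}_{(\mathrm{A})}+\underbrace{\sum_{k=1}^{D}\sum_{i\in E_k}\sum_{t\in T_i}\Big(\ell_t(\yhat_{i,t})-\ell_t(f(\bx_t))\Big)}_{(\mathrm{B})},
\]
where $(\mathrm{A})$ is the regret of the master (Algorithm~\ref{alg:template_ewa}) against the ``pruning specialist'' predicting with leaf $i(t)$ at round $t$, and $(\mathrm{B})$ collects the regrets of the local FTL learners at the leaves of $E$.

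For $(\mathrm{B})$ I would treat each leaf $i$ at level $k$ in isolation, splitting its regret into estimation and approximation. FTL on the square loss predicts the running mean of past labels, so it has $O(\ln|T_i|)=O(\ln T)$ regret against the best fixed prediction in the ball of $i$ (estimation); comparing that fixed prediction with $f$ via $|f(\bx_t)-f(\bx_i)|\le L\,\rho(k,t)$ --- valid since $\bx_t\in\sB(\bx_i,\rho(k,t))$ whenever $i$ is active at $t$ --- costs $O(L\,\rho(k,t))$ at round $t$ (approximation). Summing the approximation costs over all rounds routed to level-$k$ leaves of $E$ and using $\rho(k,t)=(L\,t)^{-1/(1+d_k)}$ with $\sum_{s\le n}s^{-1/(1+d_k)}=O\big(n^{d_k/(1+d_k)}\big)$ yields $O\big((L\,T_{E,k})^{d_k/(1+d_k)}\big)$ --- precisely the second term of the bound --- while the estimation costs contribute $O(\ln T)\sum_k|E_k|=O(|E|\ln T)$.

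For $(\mathrm{A})$ I would use the tree-experts device~\citep{helmbold1997predicting} on top of the sleeping-experts framework~\citep{freund1997using}: each pruning is a meta-expert with prior $\pi(\cdot)$, only the $D$ experts on $\bpi_t$ are awake at round $t$, and exp-concavity of the square loss turns the exponential-weights guarantee into $(\mathrm{A})=O\big(\ln(1/\pi(E))\big)$; with a prior spending $O(\ln N_k)$ bits per level-$k$ leaf (where $N_k$ is the number of level-$k$ nodes of $\sT$) this is $O\big(\sum_k|E_k|\ln N_k\big)$. The delicate, and I expect hardest, step is the structural one: showing that \texttt{propagateDim} produces at each level $k$ a net of $N_k=O\big((L\,T)^{d_k/(1+d_k)}\big)$ nodes --- forced by the size cap --- that still $\rho(k,T)$-covers the observed instances, so that the Lipschitz estimate used in $(\mathrm{B})$ applies to every routed instance. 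Reconciling the cap $C\,(L\,T)^{d_k/(1+d_k)}=C\,\rho(k,T)^{-d_k}$ with the covering requirement, when the ambient dimension is $d\ge d_k$, is exactly why the statement restricts to $E\in\sE_{\mathrm{dim}}(\sT)$ and retains the metric-dimension constant $C$. Granted $N_k=O\big((L\,T)^{d_k/(1+d_k)}\big)$, step $(\mathrm{A})$ collapses to $O\big(\ln(L\,T)\sum_k|E_k|\big)$, so $(\mathrm{A})+(\mathrm{B})=\widetilde{O}(|E|)+\sum_{k=1}^D(L\,T_{E,k})^{d_k/(1+d_k)}$.

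It remains to convert $\widetilde{O}(|E|)$ into the stated expectation. By admissibility $|E_k|\le C\,(L\,T)^{d_k/(1+d_k)}$ for every $k$, so by Cauchy--Schwarz
\[
|E|^2=\Big(\sum_{k=1}^{D}|E_k|\Big)^2\le D\sum_{k=1}^{D}|E_k|^2\le CD\sum_{k=1}^{D}|E_k|\,(L\,T)^{d_k/(1+d_k)}=CD\,|E|\,\E\big[(L\,T)^{d_K/(1+d_K)}\big],
\]
where the last equality is the definition of the expectation over $K$. Hence $|E|\le CD\,\E\big[(L\,T)^{d_K/(1+d_K)}\big]$, which absorbs the estimation-plus-aggregation cost $\widetilde{O}(|E|)$ into the first term of the claimed bound and completes the argument.
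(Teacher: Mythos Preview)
Your decomposition into (A) and (B) and your treatment of the FTL estimation error and the Lipschitz approximation error in (B) match the paper's proof (Lemma~\ref{lem:nonparametric_master_lemma} plus Theorem~\ref{thm:sleeping_ewa_trees}) essentially step for step. The one genuine difference is how $|E|$ is converted into $\E\big[(LT)^{d_K/(1+d_K)}\big]$. The paper does this via Lemma~\ref{lem:bound_on_the_leaves}: a level-wise volumetric packing identity followed by the harmonic--arithmetic mean inequality, which gives $|E|\le\E\big[(LT)^{d_K/(1+d_K)}\big]$ for \emph{every} pruning of $\sT$, without ever invoking the admissibility constraint. Your Cauchy--Schwarz route is more elementary and plugs in the admissibility hypothesis $|E_k|\le C(LT)^{d_k/(1+d_k)}$ directly, at the price of an extra factor $CD$ (absorbed in $\widetilde{\scO}$). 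So in the paper's argument the restriction to $\sE_{\mathrm{dim}}(\sT)$ is purely interpretive---it singles out the prunings for which the bound is informative---whereas in your argument it is load-bearing. One further simplification you are missing: the theorem as stated runs the \emph{unmodified} Subroutine~\ref{alg:propagate}, not \texttt{propagateDim}, so whenever leaf $i$ at level $k$ is active at time $t$ one has $\|\bx_t-\bx_i\|\le\rho(k,t)$ by construction and your ``delicate structural step'' evaporates; correspondingly, for (A) the paper just uses $M_T\le DT$ in Theorem~\ref{thm:sleeping_ewa_trees} to get $(\mathrm{A})\le 2|E|\ln(DT/|E|)$, rather than routing through per-level node counts $N_k$.
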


\textbf{Local loss bounds.}
The third notion of adaptivity we study is with respect to the loss of the local learners in each node of a hierarchical net. The local loss profile is parameterized with respect to a sequence $\tau_1,\ldots,\tau_D$ of nonnegative and nondecreasing $\tau_k : \cbr{1,\ldots,T} \to \reals$ such that each $\tau_k$ bounds the total loss of all local learners at level $k$ of the hierarchical net. In order to achieve better regrets when the data sequence can be predicted well by local learners in a shallow pruning we assume $\tau_1(n) < \cdots < \tau_D(n) = n$ for all $n=1,\ldots,T$, where the choice of $\tau_D(n) = n$ allows us to fall back to the standard regret bounds if the data sequence is hard to predict. 

%

Unlike our previous applications, focused on regression with the square loss, we now consider binary classification with absolute loss $\ell_t(\yhat_t) = |\yhat_t-y_t|$, which ---unlike the square loss--- is not exp-concave. As we explained in Section~\ref{sec:intro}, using losses that are not exp-concave is motivated by the presence of first-order regret bounds, which allow us to take advantage of good local loss profiles. While the exp-concavity of the square loss dispensed us from the need of tuning Algorithm~\ref{alg:template_ewa} using properties of the pruning, here we circumvent the tuning issue by replacing Algorithm~\ref{alg:template_ewa} with the parameter-free Algorithm~\ref{alg:template_adanormalhedge} (stated in Appendix~\ref{sec:loal_adanormalhedge}), which is based on the AdaNormalHedge algorithm of~\citet{luo2015achieving}.
As online local learners we use self-confident Weighted Majority~\citep[Exercise~2.10]{cesa2006prediction} with two constant experts predicting $0$ and $1$. In the following, we denote by $\Lambda_{i,T}$ the cumulative loss of a local learner at node $i$ over the time steps $T_i$ when the expert is active.
Similarly to the previous section, we compete against the class $\sF_L$ of all Lipschitz functions, and introduce the following:
\[
    \sE_{\mathrm{loss}}(\sT) \equiv \big\{ E \in \sT \,:\, \sum\nolimits_{i \in \leaves_k(\sT,E)}\ \Lambda_{i,T} \le \tau_k(T_{E,k}), \ k=1,\ldots,D \big\}~.
\]
If $E \in \sE_{\mathrm{loss}}(\sT)$, the total loss of all the leaves at a particular level behaves in accordance with $(\tau)_{i=1}^D$.
\begin{theorem}
\label{thm:tau_regret}
Suppose that the Algorithm~\ref{alg:template_adanormalhedge} runs self-confident weighted majority at each node with radius tuning function
$
\rho(k,t) = (L \tau_k(t))^{-\frac{1}{2 + d}}
$
and let $\sT$ the resulting hierarchical net. Then for all pruning $E \in \sE_{\mathrm{loss}}(\sT)$ and for all $f \in \sF_L$ the regret satisfies
\[
R_T(f) \defOtilde
\E\br{(L\, \tau_K(T))^{\frac{d}{2+d}}}
+ \sum_{k=1}^{D} (L\, \tau_k(T_{E,k}))^{\frac{1 + d}{2 + d}}
+ \sqrt{ \E\br{(L\, \tau_K(T))^{\frac{d}{2+d}}} \sum_{k=1}^{D} \tau_k(T_{E, k}) }~.
\]
\end{theorem}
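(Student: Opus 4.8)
The plan is to follow the level-wise strategy used for Theorems~\ref{thm:ll_regression_regret} and~\ref{thm:dim_regression_regret}, replacing the exp-concavity arguments (which gave logarithmic per-ball regret for the square loss) by first-order (small-loss) regret bounds, both for the meta-aggregator Algorithm~\ref{alg:template_adanormalhedge} and for the local learners. Fix $f\in\sF_L$ and an admissible pruning $E\in\sE_{\mathrm{loss}}(\sT)$. For each round $t$ let $i(t)$ be the node of $E$ on the active path $\bpi_t$; it is unique and lies at some level $k(t)$ because the leaves of $E$ form an antichain of $\sT$, and consequently the rounds $T_i$ on which a leaf $i\in\leaves_k(\sT,E)$ is active are exactly the rounds at which $E$ ``uses'' $i$, so $\sum_{i\in\leaves_k(\sT,E)}|T_i|=T_{E,k}$ and $\sum_{k}T_{E,k}=T$. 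Decompose
\[
R_T(f)=\underbrace{\sum_{t=1}^T\big(\ell_t(\yhat_t)-\ell_t(\yhat_{i(t),t})\big)}_{R^{\mathrm{meta}}}\;+\;\underbrace{\sum_{t=1}^T\big(\ell_t(\yhat_{i(t),t})-\ell_t(f(\bx_t))\big)}_{R^{\mathrm{loc}}}\,.
\]

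For $R^{\mathrm{meta}}$ I would invoke the guarantee of the parameter-free Algorithm~\ref{alg:template_adanormalhedge}: since at round $t$ the active experts form the path $\bpi_t$ and AdaNormalHedge is a first-order sleeping-expert algorithm, competing against the sleeping expert ``predict with $E$'s leaf'' costs $R^{\mathrm{meta}}=\sctilO\!\big(\sqrt{\Lambda_E\,|E|}+|E|\big)$, where $\Lambda_E=\sum_t\ell_t(\yhat_{i(t),t})=\sum_{k}\sum_{i\in\leaves_k(\sT,E)}\Lambda_{i,T}\le\sum_k\tau_k(T_{E,k})$ by $E\in\sE_{\mathrm{loss}}(\sT)$, and $\sctilO(|E|)$ is the $\ln(1/q_E)$ cost of a pruning with $|E|$ leaves under the online context-tree prior on prunings of $\sT$ (the branching factors, which are polynomial in $1/\rho(D,T)$, enter only the logarithmic factors).

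For $R^{\mathrm{loc}}$ I would argue leaf by leaf. At a leaf $i$ at level $k$, the local learner is self-confident Weighted Majority over the two constant experts $\{0,1\}$; for $y_t\in\{0,1\}$ the predictor $f(\bx_i)$ is a convex combination of these experts whose absolute loss is the same combination of theirs, so $\Lambda^{f(\bx_i)}_i:=\sum_{t\in T_i}\ell_t(f(\bx_i))\ge\min_{j\in\{0,1\}}\sum_{t\in T_i}\ell_t(j)$, and the self-confident bound gives $\Lambda_{i,T}\le\Lambda^{f(\bx_i)}_i+\scO\!\big(\sqrt{\Lambda_{i,T}}+1\big)$ (the two constant experts contribute only a constant $\ln 2$). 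Lipschitzness and $\|\bx_t-\bx_i\|\le\rho(k,t)$ give $\big|\Lambda^{f(\bx_i)}_i-\sum_{t\in T_i}\ell_t(f(\bx_t))\big|\le L\sum_{t\in T_i}\rho(k,t)=:A_i$, so, summing over leaves and levels and then using per-level Cauchy--Schwarz $\sum_{i\in\leaves_k(\sT,E)}\sqrt{\Lambda_{i,T}}\le\sqrt{|\leaves_k(\sT,E)|\,\tau_k(T_{E,k})}$ together with a second Cauchy--Schwarz across levels,
\[
R^{\mathrm{loc}}\;\le\;\sum_{k=1}^D\sum_{i\in\leaves_k(\sT,E)}A_i\;+\;\sctilO\!\Big(\sqrt{\textstyle|E|\sum_k\tau_k(T_{E,k})}+|E|\Big)\,.
\]
(A small amount of care --- a case split on whether $f$ is a good comparator on ball $i$ --- is needed to pass from $\Lambda^{f(\bx_i)}_i$ to $\sum_{t\in T_i}\ell_t(f(\bx_t))$ without inflating the loss inside the square roots; this is routine.)

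It then remains to (i) convert the $|E|$-dependence into the stated expectation and (ii) evaluate the approximation term. For (i), since the centres added at level $k$ form a packing of $\sX$ at scale $\rho(k,T)$, Definition~\ref{def:metric_dimension} yields $|\leaves_k(\sT,E)|\le|S_k|=\scO\!\big((L\tau_k(T))^{d/(2+d)}\big)$, hence $|\leaves_k(\sT,E)|^2=\scO(1)\,|\leaves_k(\sT,E)|\,(L\tau_k(T))^{d/(2+d)}$; summing over $k$ and using $\big(\sum_k a_k\big)^2\le D\sum_k a_k^2$ gives $|E|^2=\scO(1)\,|E|\,\E\big[(L\tau_K(T))^{d/(2+d)}\big]$, i.e.\ $|E|=\scO\!\big(\E[(L\tau_K(T))^{d/(2+d)}]\big)$, which turns the $|E|$ and $\sqrt{|E|\sum_k\tau_k(T_{E,k})}$ terms into the first and third terms of the bound. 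For (ii), the main obstacle, one must show $\sum_{k}\sum_{i\in\leaves_k(\sT,E)}A_i=\sctilO\!\big(\sum_k(L\tau_k(T_{E,k}))^{(1+d)/(2+d)}\big)$: this is the point where the radius tuning $\rho(k,t)=(L\tau_k(t))^{-1/(2+d)}$ must be shown to be calibrated so that the cumulative per-round approximation error $L\sum\rho(k,\cdot)$ over the $T_{E,k}$ rounds handled by level-$k$ leaves of $E$ reorganises into $(L\tau_k(T_{E,k}))^{(1+d)/(2+d)}$ --- the exact analogue of the identity $L_k\sum_{t\le n}(L_kt)^{-1/(d+1)}=\scO\!\big((L_kn)^{d/(d+1)}\big)$ behind Theorem~\ref{thm:ll_regression_regret}, but now driven by the loss profile $\tau_k$ (and by the relation between the number of level-$k$ balls and $\tau_k$) rather than by raw time. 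Adding the three contributions over all levels gives the claim; checking this last reorganisation, and that the AdaNormalHedge / self-confident Weighted Majority interaction keeps every surviving constant inside $\sctilO$, is where the real work lies.
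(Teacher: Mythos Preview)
Your proposal is correct and follows essentially the same route as the paper: the master decomposition (Lemma~\ref{lem:nonparametric_master_lemma}) is exactly your split $R_T(f)\le R^{\mathrm{tree}}_T(E)+\sum_k\sum_i R^{\mathrm{loc}}_{i,T}+\text{(Lipschitz approximation)}$, and the three pieces are bounded via AdaNormalHedge (Theorem~\ref{thm:adanormalhedge}), the self-confident WM bound plus two Cauchy--Schwarz steps, and an integral estimate exploiting monotonicity of $\tau_k$. Two small simplifications the paper makes relative to your sketch: it compares each local learner directly to the best constant $y^\star_i$ (so $\sum_{t\in T_i}(\ell_t(y^\star_i)-\ell_t(f(\bx_i)))\le 0$ and the ``case split'' you anticipate is unnecessary), and it bounds $|E|$ by a harmonic-mean volumetric argument (Lemma~\ref{lem:bound_on_the_leaves}) that gives $|E|\le\E[(L\tau_K(T))^{d/(2+d)}]$ directly, without the extra factor of $D$ your packing-plus-Cauchy--Schwarz estimate incurs.
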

\section{Future work}
Our algorithm, based on prediction with tree experts, is computationally efficient: the running time at each step is only logarithmic in the size of the tree.
On the other hand, because the algorithm constructs the tree dynamically, adding a new path of size $\mathcal{O}(D)$ in each round, space grows linearly in time (note that the algorithm never allocates the entire tree, but only the paths corresponding to active experts).
An interesting avenue for future research is to investigate extensions of our algorithm to \emph{bounded} space prediction models, similarly to other online nonparametric predictors, e.g., budgeted kernelized Perceptron~\citep{cavallanti2007tracking,dekel2008forgetron}.
Beside regression, we also presented a locally-adaptive version of our algorithm for randomized binary classification through absolute loss.
Our proofs can be easily extended to any exp-concave loss functions, as these do not require any tuning of learning rates in local predictors (tuning local learning rates would complicate our analysis).
We also believe that it is possible to extend our approach to any convex loss through a parameter-free local learner, such as those proposed in~\citep{koolen2015second,orabona2019modern}.

\section*{Broader impact} We believe that presented research should be categorized as basic research and we are not targeting any specific application area. Theorems may inspire new algorithms and theoretical investigation. The algorithms presented here can be used for many different applications and a particular use may have both positive or negative impacts. We are not aware of any immediate short term negative implications of this research and we believe that a broader impact statement is not required for this paper.

\subsection*{Acknowledgments.}
We are grateful to Pierre Gaillard, Sébastien Gerchinovitz, and Andr\'as Gy\"orgy for many insightful comments.

\bibliographystyle{unsrtnat}
\bibliography{learning,ncb}

\newpage
\appendix
\section{Omitted algorithms}
\subsection{Algorithm for nonparametric classification with local losses}
\label{sec:loal_adanormalhedge}
Instead of the standard exponential weights on which the updates of Algorithm~\ref{alg:template_ewa} are based, AdaNormalHedge performs update using the function
\[
  \psi(r,c) = \frac{1}{2} \pr{ \exp\pr{\frac{[r + 1]_+^2}{3 (c + 1)}} - \exp\pr{\frac{[r - 1]_+^2}{3 (c + 1)}} }~.
\]
\begin{algorithm}[H]
  \caption{\label{alg:template_adanormalhedge}
    Locally Adaptive Online Learning (AdaNormalHedge style)}
  \begin{algorithmic}[1]
    \Require{Depth parameter $D$, radius tuning function $\rho : \mathbb{N} \times \mathbb{N} \mapsto \reals$}
    \State $S_1 \gets \varnothing, \ldots, S_D \gets \varnothing$  \Comment{Centers at each level}
    \For{each round $t = 1,2, \ldots$}
    \State \text{Receive} $\bx_t$ \Comment{\textbf{Prediction}}
    \State ${\dt \big(\bpi_t,\widehat{\by}_t\big) \gets}$ \texttt{propagate}($\bx_t,t$) \Comment{Algorithm~\ref{alg:propagate}}
    \For{each $\bv \sqsubseteq \bpi_t$}
    \If{$t=1$}
    \State $w_{\bv,t} \gets \psi(0,0)$
    \Else
    \State $w_{\bv,t} \gets \psi(\bar{r}_{\bv, t-1}, C_{\bv, t-1})$
    \EndIf
    \EndFor
    \State \text{Predict} ${\dt\quad \yhat_t \gets \frac{1}{Z_t} \sum_{\bv \sqsubseteq \bpi_t} w_{\bv,t}\,\yhat_{\bv, t} \quad}$ where ${\dt\quad Z_{t} = \sum_{\bv \sqsubseteq \bpi_t} w_{\bv,t} }$
    \State Observe $y_t$ \Comment{\textbf{Update}}
    \State \texttt{update}$(\bpi_t, \bx_t, y_t)$
    \State ${\dt \bar{\ell}_t \gets \sum_{\bv \sqsubseteq \bpi_t} w_{\bv, t} \ell_t(\yhat_{\bv, t}) }$
    \For{each $\bv \sqsubseteq \bpi_t$}
    \State $r_{\bv, t} \gets \bar{\ell}_t - \ell_t(\yhat_{\bv, t}),\quad \bar{r}_{\bv, t} \gets \bar{r}_{\bv, t-1} + r_{\bv, t},\quad C_{\bv, t} \gets C_{\bv, t-1} + |r_{\bv, t}|$
    \EndFor
    \EndFor
  \end{algorithmic}
\end{algorithm}
\section{Learning with expert advice over trees}
In order to prove the regret bounds in our locally-adaptive learning setting, we start by deriving bounds for prediction with expert advice when the competitor class is all the prunings of a tree whose each node hosts an expert, a framework initially investigated by \citet{helmbold1997predicting}. Our analysis uses the sleeping experts setting of \citet{freund1997using}, in which only a subset $\sE_t$ of the node experts are active at each time step $t$. In our locally-adaptive setting, the set of active experts at time $t$ corresponds to the active root-to-leaf path $\bpi_t$ selected by the current instance $\bx_t$ ---see Section~\ref{sec:regression}. The inactive experts at time $t$ neither output predictions nor get updated. The prediction of a pruning $E$ at time $t$, denoted with $f_{E,t}$ is the prediction $\yhat_{i,t}$ of the node expert corresponding to the unique leaf $i$ of $E$ on $\bpi_t$.
\begin{algorithm}
\caption{Learning over trees through sleeping experts}
\label{alg:sleeping_experts_over_tree}
\begin{algorithmic}[1]
\Require{Tree $\sT$ and initial weights for each node of the tree}
\For{each round $t=1,2,\ldots$}
\State Observe predictions of active experts $\sE_t$ (corresponding to a root-to-leaf path in the tree)
\State Predict $\yhat_t$ and observe $y_t$
\State Update the weight of each active expert
\EndFor
\end{algorithmic}
\end{algorithm}

Next, we consider two algorithms for the problem of prediction with expert advice over trees. In order to be simultaneously competitive with all prunings, we need algorithms that do not require tuning of their parameters depending on the specific pruning against which the regret is measured. In case of exp-concave losses (like the square loss) tuning is not required and Hedge-style algorithms work well. In case of generic convex losses, we use the more complex parameterless algorithm AdaNormalHedge. 

We start by recalling the algorithm for learning with sleeping experts and the basic regret bound of \citet{freund1997using}. The sleeping experts setting assumes a set of $M$ experts without any special structure. At every time step $t$ only an adversarially chosen subset $\sE_t$ of the experts provides predictions and gets updated ---see Algorithm~\ref{alg:sleeping_ewa}.
\begin{algorithm}
\caption{Exponential weights with sleeping experts for $\eta$-exp-concave losses}
\label{alg:sleeping_ewa}
\begin{algorithmic}[1]
\Require{Initial nonnegative weights $\cbr{w_{i,1}}_{i=1,\ldots,M}$}
\For{each round $t=1,2,\ldots$}
\State Receive predictions $\yhat_{i,t}$ of active experts $i \in \sE_t$
\State ${\dt\quad
    \yhat_t = \frac{\sum_{i \in \sE_t} w_{i,t}\,\yhat_{i,t}}{\sum_{i \in \sE_t} w_{i,t}}
}$ \Comment{Prediction}
\State Observe $y_t$
\State For $i \in \sE_t$
${\dt\quad
    w_{i,t+1} = \frac{w_{i,t}\,e^{-\eta \ell_t(\yhat_{i,t})}}{\sum_{j \in \sE_t} w_{j,t}\,e^{-\eta \ell_t(\yhat_{j,t})}} \sum_{j \in \sE_t} w_{j,t}
}$ \Comment{Update}
\EndFor
\end{algorithmic}
\end{algorithm}
The regret bound is parameterized in terms of
the relative entropy
$\RE(\bu ~||~ \bw_1)$ between the initial of distribution over experts $\bw_1$ and any target distribution $\bu$.
The following theorem states a slightly more general bound that holds for any $\eta$-exp-concave loss function (for completeness, the proof is given in Appendix~\ref{sec:additional_proofs}).
\begin{theorem}[\citep{freund1997using}]
\label{thm:sleeping_ewa}
If Algorithm~\ref{alg:sleeping_ewa} is run on any sequence $\ell_1,\ldots,\ell_T$ of $\eta$-exp-concave loss functions, then for any sequence $\sE_1,\ldots,\sE_T \subseteq \{1,\ldots,M\}$ of awake experts and for any distribution $\bu$ over $\cbr{1,\ldots,M}$, the following holds
\begin{equation}
\label{eq:sleeping_ewa_regret}
  \sum_{t=1}^T U_t\,\ell_t(\yhat_t) - \sum_{t=1}^T \sum_{i \in \sE_t} u_i\,\ell_t(\yhat_{i,t}) \leq \frac{1}{\eta}\RE\left(\bu \, \left\| \, \frac{\bw_1}{\norm{\bw}_1} \right.\right)
\end{equation}
where $U_t = \sum_{i \in \sE_t} u_i$.
\end{theorem}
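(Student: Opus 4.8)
The plan is a one-step relative-entropy drift argument: at each round, $\eta$-exp-concavity converts the loss of the aggregated prediction into a decrease of the weights, and the renormalization built into the update makes the sleeping experts drop out of the drift term. First, a normalization: both the prediction $\yhat_t$ and the multiplicative update of Algorithm~\ref{alg:sleeping_ewa} are invariant under a global rescaling of $\bw_1$, while the right-hand side of the claimed bound is $\tfrac1\eta\RE(\bu\,\|\,\bw_1/\|\bw_1\|_1)$; hence we may assume $\|\bw_1\|_1 = 1$. Summing the update over $i\in\sE_t$ shows $\sum_{i\in\sE_t} w_{i,t+1} = \sum_{i\in\sE_t} w_{i,t}$, and $w_{i,t+1}=w_{i,t}$ for $i\notin\sE_t$, so $\|\bw_t\|_1 = 1$ for all $t$; in particular $\RE(\bu\,\|\,\bw_{T+1})\ge 0$.

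Fix $\bu$ and track the potential $\RE(\bu\,\|\,\bw_t) = \sum_{i} u_i\ln(u_i/w_{i,t})$. Then $\RE(\bu\,\|\,\bw_t) - \RE(\bu\,\|\,\bw_{t+1}) = \sum_i u_i\ln(w_{i,t+1}/w_{i,t})$, where the terms with $i\notin\sE_t$ vanish; substituting the update rule for $i\in\sE_t$ gives
\[
\RE(\bu\,\|\,\bw_t) - \RE(\bu\,\|\,\bw_{t+1}) = -\eta\sum_{i\in\sE_t} u_i\,\ell_t(\yhat_{i,t}) \;-\; U_t\,\ln\!\frac{\sum_{j\in\sE_t} w_{j,t}\,e^{-\eta\ell_t(\yhat_{j,t})}}{\sum_{j\in\sE_t} w_{j,t}}.
\]
The key step is to bound the log-ratio. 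Writing $\hat p_{j,t} = w_{j,t}\big/\sum_{i\in\sE_t}w_{i,t}$, the prediction rule gives $\yhat_t = \sum_{j\in\sE_t}\hat p_{j,t}\,\yhat_{j,t}$, and $\eta$-exp-concavity of $\ell_t$ means $z\mapsto e^{-\eta\ell_t(z)}$ is concave, so Jensen's inequality yields $e^{-\eta\ell_t(\yhat_t)} \ge \sum_{j\in\sE_t}\hat p_{j,t}\,e^{-\eta\ell_t(\yhat_{j,t})}$, i.e.\ the log-ratio above is at most $-\eta\ell_t(\yhat_t)$. Since $U_t\ge 0$, this gives the per-round inequality
\[
\RE(\bu\,\|\,\bw_t) - \RE(\bu\,\|\,\bw_{t+1}) \;\ge\; \eta\Big(U_t\,\ell_t(\yhat_t) - \sum_{i\in\sE_t} u_i\,\ell_t(\yhat_{i,t})\Big).
\]
Telescoping over $t=1,\ldots,T$ and using $\RE(\bu\,\|\,\bw_{T+1})\ge 0$ gives $\sum_t\big(U_t\ell_t(\yhat_t) - \sum_{i\in\sE_t}u_i\ell_t(\yhat_{i,t})\big) \le \tfrac1\eta\RE(\bu\,\|\,\bw_1)$, which is the claim after undoing the normalization.

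The argument is short; there is no serious obstacle, only two points to handle with care. First, well-definedness of the potential: if $w_{i,1}=0$ for some $i$ with $u_i>0$, the right-hand side is $+\infty$ and the statement is vacuous; otherwise the update preserves strict positivity along active coordinates, so every $\RE(\bu\,\|\,\bw_t)$ is finite. Second, one must correctly carry the sleeping factors $U_t$ through the drift identity --- the essential structural fact being exactly that inactive experts contribute zero to $\RE(\bu\,\|\,\bw_t)-\RE(\bu\,\|\,\bw_{t+1})$ because their weights are frozen, so only the active mass $U_t$ multiplies the mixability gain. The two genuinely load-bearing ingredients are thus (i) mass preservation $\|\bw_t\|_1\equiv 1$ from the renormalization, and (ii) Jensen/exp-concavity to turn $\ell_t(\yhat_t)$ into a log-weight decrease.
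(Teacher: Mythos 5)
Your proposal is correct and follows essentially the same route as the paper's proof: a relative-entropy potential argument where the one-step drift is computed from the update rule, the log-ratio term is bounded via exp-concavity and Jensen's inequality, and the bound follows by telescoping. The extra care you take with the normalization $\|\bw_t\|_1\equiv 1$ and the nonnegativity of the final potential is a welcome tightening of a step the paper leaves implicit.
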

By taking $\bw_1$ to be uniform over the experts, the above theorem implies a bound with a $\ln M$ factor. However, since we predict and perform updates only with respect to \textsl{awake} experts, this can be improved to $\ln M_T$, where $M_T$ is the number of distinct experts ever awake throughout the $T$ time steps. The following lemma (whose proof is deferred to Appendix~\ref{sec:additional_proofs}) formally states this fact.

Fix a sequence $\sE_1,\ldots,\sE_T \subseteq \{1,\ldots,M\}$ of awake experts such that $\big|\sE_1 \cup\cdots\cup \sE_T\big| = M_T$. Let the uniform distribution supported over the awake experts, denoted with $\bw_1^{\sE}$, be defined by $w_{i,1}^{\sE} = 1/M_T$ if $i \in \sE_1 \cup\cdots\cup \sE_T$ and $0$ otherwise.
\begin{lemma}
\label{lem:sleeping_ewa_prior}
Suppose Algorithm~\ref{alg:sleeping_ewa} is run with initial weights $w_{i,1} = 1$ for $i=1,\ldots,M$ and with a sequence $\sE_1,\ldots,\sE_T \subseteq \{1,\ldots,M\}$ of awake experts. Then the regret of the algorithm initialized with $\bw_1$ matches the regret of the algorithm initialized with $\bw_1^{\sE}$.
\end{lemma}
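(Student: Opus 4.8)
The plan is to show that Algorithm~\ref{alg:sleeping_ewa} produces the \emph{identical} sequence of predictions $\yhat_1,\ldots,\yhat_T$ under the two initializations $\bw_1=(1,\ldots,1)$ and $\bw_1^{\sE}$; once this is established, the left-hand side of~\eqref{eq:sleeping_ewa_regret} --- the actual regret of the algorithm against any comparator $\bu$ --- is automatically the same in both cases, which is precisely the assertion of the lemma. I would obtain the equality of predictions through two elementary reductions: (i) the initial weights of experts that are never awake are irrelevant, and (ii) the predictions are invariant under a common positive rescaling of all initial weights.

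For (i), I would observe that an expert $i\notin\sE_1\cup\cdots\cup\sE_T$ never appears in the prediction rule, nor in the normalizers of the update (both of which range only over the awake set $\sE_t$), and is never itself updated, so $w_{i,t}\equiv w_{i,1}$. Hence replacing $w_{i,1}=1$ by $w_{i,1}=0$ for every never-awake $i$ changes neither the predictions nor the trajectory of the awake experts' weights; call the resulting initialization $\bw_1'$, which puts mass $1$ on each of the $M_T$ awake experts and $0$ elsewhere.

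For (ii), I would run a one-line induction on $t$ showing that initializing with $c\,\bw_1'$ for any $c>0$ yields, at every step, weights equal to $c$ times those obtained from $\bw_1'$: the base case is immediate, and in the update rule the factor $w_{i,t}e^{-\eta\ell_t(\yhat_{i,t})}$, the normalizer $\sum_{j\in\sE_t}w_{j,t}e^{-\eta\ell_t(\yhat_{j,t})}$, and the total mass $\sum_{j\in\sE_t}w_{j,t}$ each carry exactly one factor of $c$, so $c$ propagates to $w_{i,t+1}$. Since $\yhat_t$ is a weighted average of the $\yhat_{i,t}$, rescaling all weights by $c$ leaves $\yhat_t$ unchanged. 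Choosing $c=1/M_T$ gives $c\,\bw_1'=\bw_1^{\sE}$, and chaining with (i) shows that $\bw_1$ and $\bw_1^{\sE}$ produce the same predictions, hence the same regret. I do not expect a genuine obstacle: the only point demanding care is the slightly unusual renormalized form of the update in Algorithm~\ref{alg:sleeping_ewa} (which keeps the total mass of the awake experts fixed), and one must confirm that this renormalization commutes with a global rescaling --- exactly what the induction in (ii) verifies. The payoff is that Theorem~\ref{thm:sleeping_ewa} instantiated at $\bw_1^{\sE}$ carries the factor $\RE\!\left(\bu\,\middle\|\,\bw_1^{\sE}/\norm{\bw_1^{\sE}}_1\right)\le\ln M_T$ for every $\bu$ supported on the awake experts, replacing the cruder $\ln M$.
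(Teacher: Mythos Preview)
Your proposal is correct and follows essentially the same approach as the paper: both arguments establish that the two initializations yield identical prediction sequences by exploiting (a) the irrelevance of never-awake experts and (b) the invariance of both the prediction rule and the renormalized update under a global positive rescaling of the weights. Your explicit separation into steps (i) and (ii) is a slightly cleaner packaging of what the paper does in a single inductive pass tracking the $1/M_T$ factor, but the underlying idea is the same.
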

We use Theorem~\ref{thm:sleeping_ewa} and Lemma~\ref{lem:sleeping_ewa_prior} to derive a regret bound for Algorithm~\ref{alg:sleeping_experts_over_tree} when predictions and updates are provided by Algorithm~\ref{alg:sleeping_ewa}.
The same regret bound can be achieved through the analysis of~\citep[Theorem 3]{mourtada2017efficient}, albeit their proof follows a different argument.
\begin{theorem}
\label{thm:sleeping_ewa_trees}
Suppose that Algorithm~\ref{alg:sleeping_experts_over_tree} is run using predictions and updates provided by Algorithm~\ref{alg:sleeping_ewa}. Then, for any sequence $\ell_1,\ldots,\ell_T$ of $\eta$-exp-concave losses and for any pruning $E$ of the input tree $\sT$,
\[
    \sum_{t=1}^T \big( \ell_t(\yhat_t) - \ell_t(f_{E,t}) \big) \leq \frac{|E|}{\eta}\ln\frac{M_T}{|E|}~.
\]
\end{theorem}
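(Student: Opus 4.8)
The plan is to derive Theorem~\ref{thm:sleeping_ewa_trees} as essentially a one-line corollary of the flat sleeping-experts bound of Theorem~\ref{thm:sleeping_ewa}, by feeding that theorem the comparator distribution that is uniform over the leaves of $E$. The only structural property of prunings I need is the following: since $E$ is obtained from $\sT$ by replace operations, the set of its leaves $\leaves_1(\sT,E)\cup\cdots\cup\leaves_D(\sT,E)$ meets every root-to-leaf path of $\sT$ in exactly one node. Consequently, at each round $t$ the active set $\sE_t=\bpi_t$ (itself a root-to-leaf path of $\sT$) satisfies $|\sE_t\cap E|=1$, and the unique node in $\sE_t\cap E$ is precisely the leaf $i$ with $f_{E,t}=\yhat_{i,t}$.

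Given this, I would proceed as follows. First apply Lemma~\ref{lem:sleeping_ewa_prior}, so that Algorithm~\ref{alg:sleeping_ewa} may be taken to start from the initial weights $\bw_1^{\sE}$, uniform over the $M_T$ experts ever awake. Then invoke Theorem~\ref{thm:sleeping_ewa} with the distribution $\bu$ defined by $u_i=1/|E|$ for $i$ a leaf of $E$ and $u_i=0$ otherwise. By the combinatorial fact above, $U_t=\sum_{i\in\sE_t}u_i=1/|E|$ for every $t$, and $\sum_{i\in\sE_t}u_i\,\ell_t(\yhat_{i,t})=\tfrac1{|E|}\ell_t(f_{E,t})$; hence the left-hand side of~\eqref{eq:sleeping_ewa_regret} is exactly $\tfrac1{|E|}\sum_{t=1}^T\big(\ell_t(\yhat_t)-\ell_t(f_{E,t})\big)$. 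For the right-hand side, since the leaves of $E$ all lie in the support of $\bw_1^{\sE}$,
\[
\RE\!\left(\bu \,\|\, \bw_1^{\sE}\right)=\sum_{i\in E}\frac1{|E|}\ln\frac{1/|E|}{1/M_T}=\ln\frac{M_T}{|E|}~.
\]
Multiplying the resulting inequality through by $|E|$ yields the claimed bound.

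The step that needs genuine care — the ``main obstacle,'' modest as it is — is the awake-experts bookkeeping. Applying Theorem~\ref{thm:sleeping_ewa} with the naive uniform prior over all $M$ tree nodes would produce a useless $\ln M$ in place of $\ln(M_T/|E|)$; Lemma~\ref{lem:sleeping_ewa_prior} is exactly what lets us pass to the prior supported on the $M_T$ awake nodes, and one must check that every leaf of $E$ is among them, so that the relative entropy stays finite (this holds for the hierarchical nets our algorithms build, since each node is a ball centred at some observed instance and is therefore on the active path at the round that instance arrives; in the abstract setting one may simply assume this, restricting attention to prunings whose leaves are all awake). All remaining steps are direct substitutions into~\eqref{eq:sleeping_ewa_regret}.
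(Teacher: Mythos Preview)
Your proposal is correct and follows essentially the same route as the paper: choose $\bu$ uniform on the leaves of $E$, use Lemma~\ref{lem:sleeping_ewa_prior} to replace the initial weights by $\bw_1^{\sE}$, observe that exactly one leaf of $E$ lies on each active path so $U_t=1/|E|$ and the weighted expert loss equals $\tfrac{1}{|E|}\ell_t(f_{E,t})$, compute $\RE(\bu\,\|\,\bw_1^{\sE})=\ln(M_T/|E|)$, and multiply through by $|E|$. Your explicit remark that the leaves of $E$ must lie in the support of $\bw_1^{\sE}$ for the relative entropy to be finite is a point the paper leaves implicit.
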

\begin{proof}
  Let $\bu$ be the uniform distribution over the $|E|$ terminal nodes of $E$. At each round, exactly one terminal node of $E$ is in the active path of $\sT$. Therefore $\ell_t(f_{E,t}) = \sum_{i \in \sE_t} u_i \ell_t(\yhat_{i,t})$, and also $U_t = \frac{1}{|E|}$ for all $t$ because only one expert in $\sE_t$ is awake in the support of $\bu$. Now note that although the algorithm is actually initialized with $w_{1,i}=1$, Lemma~\ref{lem:sleeping_ewa_prior} shows that the regret remains the same if we assume the algorithm is initialized with $\bw_1^{\sE}$. The choice of the competitor $\bu$ gives us
$
\RE(\bu ~||~ \bw_1^{\sE}) = \ln\big(M_T/|E|\big)
$.
By applying Theorem~\ref{thm:sleeping_ewa} we finally get
\begin{align*}
    \sum_{t=1}^T U_t & \ell_t(\yhat_t) - \sum_{t=1}^T \sum_{i \in E_t} u_i \ell_t(\yhat_{i,t})
\\&=
    \frac{1}{|E|} \sum_{t=1}^T \big( \ell_t(\yhat_t) - \ell_t(f_{E,t}) \big) \tag{only one expert awake in the active path}
\\&\le
    \frac{1}{\eta} \ln\frac{M_T}{|E|}
\end{align*}
concluding the proof.
\end{proof}
In case of general convex losses, we simply apply the following theorem where $\Lambda_E = \ell_1(f_{E,1}) +\cdots+ \ell_T(f_{E,T})$ is the cumulative loss of pruning $E$.
\begin{theorem}[Section~6~in~\citep{luo2015achieving}]
\label{thm:adanormalhedge}
Suppose that Algorithm~\ref{alg:sleeping_experts_over_tree} is run using predictions and updates provided by AdaNormalHedge. Then, for any sequence $\ell_1,\ldots,\ell_T$ of convex losses and for any pruning $E$ of the input tree $\sT$,
\[
\sum_{t=1}^T \big( \ell_t(\yhat_t) - \ell_t(f_{E,t}) \big) \defOtilde \sqrt{|E| \Lambda_E \ln\frac{M_T}{|E|}}~.
\]
\end{theorem}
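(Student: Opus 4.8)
The plan is to follow the proof of Theorem~\ref{thm:sleeping_ewa_trees} almost verbatim, replacing the exp-concave sleeping bound of Theorem~\ref{thm:sleeping_ewa} by the parameter-free guarantee of AdaNormalHedge from Section~6 of~\citet{luo2015achieving}, and then adding one self-bounding step to convert its (second-order) bound into the small-loss form appearing in the statement. Concretely, the starting point is the sleeping-experts form of the AdaNormalHedge bound: for any competitor distribution $\bu$ and any prior $\bw_1$ over the node-experts,
\[
\sum_{t=1}^T \Big( U_t\, \bar\ell_t - \sum_{i\in\sE_t} u_i\, \ell_t(\yhat_{i,t}) \Big) \defOtilde \sqrt{\Big(\sum_i u_i\, C_{i,T}\Big)\, \mathrm{KL}(\bu\,\|\,\bw_1)} + \mathrm{KL}(\bu\,\|\,\bw_1)~,
\]
where $U_t = \sum_{i\in\sE_t} u_i$, $\bar\ell_t$ is the (normalized) weighted loss of the active experts on $\bpi_t$, $C_{i,T} = \sum_{t\in T_i} |r_{i,t}|$ collects the absolute instantaneous regrets of node $i$ over its active rounds, and $\defOtilde$ hides only $\ln\ln T$-type factors coming from the lack of tuning. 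This is derived from the standard AdaNormalHedge bound exactly as Theorem~\ref{thm:sleeping_ewa} and Lemma~\ref{lem:sleeping_ewa_prior} derive the exp-concave sleeping bound: Algorithm~\ref{alg:template_adanormalhedge} already implements the sleeping variant, since $r_{\bv,t}$, $\bar r_{\bv,t}$ and $C_{\bv,t}$ advance only when $\bv$ lies on the active path; and a node never on any active path retains the constant weight $\psi(0,0)$ and never enters a prediction, so --- exactly as in Lemma~\ref{lem:sleeping_ewa_prior} --- the prior may be taken uniform over the $M_T$ nodes that are ever awake, making $\mathrm{KL}(\bu\,\|\,\bw_1)$ and $\sum_i u_i C_{i,T}$ depend only on those nodes and rounds.

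The next step instantiates $\bu$ as the uniform distribution over the $|E|$ leaves of $E$, as in the proof of Theorem~\ref{thm:sleeping_ewa_trees}. Each round exactly one leaf of $E$ lies on $\bpi_t$, so $U_t = 1/|E|$, $\sum_{i\in\sE_t} u_i\, \ell_t(\yhat_{i,t}) = \frac1{|E|}\ell_t(f_{E,t})$, and $\mathrm{KL}(\bu\,\|\,\bw_1) = \ln(M_T/|E|)$; since $\ell_t$ is convex, $\ell_t(\yhat_t) \le \bar\ell_t$, so the left-hand side above is at least $\frac1{|E|}\sum_t (\ell_t(\yhat_t) - \ell_t(f_{E,t}))$. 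The same one-leaf-per-round counting gives $\sum_i u_i\, C_{i,T} = \frac1{|E|}\sum_t |\bar\ell_t - \ell_t(f_{E,t})| \le \frac1{|E|}\big(\sum_t \bar\ell_t + \Lambda_E\big)$, using $|a-b|\le a+b$ for $a,b\ge 0$.

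It remains to control $\sum_t \bar\ell_t$, and this is where the only genuinely new ingredient enters. The displayed bound itself controls $\frac1{|E|}(\sum_t\bar\ell_t - \Lambda_E)$, so $\sum_i u_i C_{i,T}$ is bounded by $\tfrac{2\Lambda_E}{|E|}$ plus a multiple of the same bound it feeds into; solving the resulting quadratic inequality gives $\sum_i u_i C_{i,T} \defOtilde \tfrac{\Lambda_E}{|E|} + \ln(M_T/|E|)$. Plugging this back yields $\frac1{|E|}\sum_t(\ell_t(\yhat_t) - \ell_t(f_{E,t})) \defOtilde \sqrt{\tfrac{\Lambda_E}{|E|}\ln(M_T/|E|)} + \ln(M_T/|E|)$, hence $\sum_t(\ell_t(\yhat_t) - \ell_t(f_{E,t})) \defOtilde \sqrt{|E|\,\Lambda_E\,\ln(M_T/|E|)} + |E|\ln(M_T/|E|)$, and the additive $|E|\ln(M_T/|E|) = \sctilO(|E|)$ term is of lower order, which is the claim. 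I expect the main obstacle to be precisely this self-bounding bookkeeping: unlike the exp-concave case, the AdaNormalHedge bound is second-order, so to reach a first-order bound scaling with $\Lambda_E$ rather than with the algorithm's own loss one must feed the regret bound into itself, while making sure the sleeping re-normalization produces the prefactor $|E|$ rather than $|E|^2$ and the logarithmic factor $\ln(M_T/|E|)$ rather than $\ln M_T$ --- the latter two being handled exactly as in Theorem~\ref{thm:sleeping_ewa_trees}.
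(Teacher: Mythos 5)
The paper does not actually prove this theorem --- it is quoted directly from Section~6 of \citet{luo2015achieving} --- and your reconstruction (the sleeping-experts AdaNormalHedge guarantee, a uniform competitor over the $|E|$ leaves so that $U_t = 1/|E|$ and $\mathrm{KL}(\bu\,\|\,\bw_1)=\ln(M_T/|E|)$, followed by self-bounding $\sum_i u_i C_{i,T}$ to pass from the second-order to the small-loss form) is precisely the argument behind that citation. The only discrepancy is the residual additive $|E|\ln(M_T/|E|) = \sctilO(|E|)$ term, which the theorem statement suppresses but which is harmless here, since the downstream application (Theorem~\ref{thm:tau_regret}) already carries an additive term of order $|E|$ from the estimation error.
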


\section{Proofs for nonparametric prediction}
We start by proving a master regret bound that can be specialized to various settings of interest. Recall that the prediction of a pruning $E$ at time $t$ is $f_{E,t} = \yhat_{i,t}$, where $\yhat_{i,t}$ is the prediction of the node expert sitting at the unique leaf $i$ of the pruning $E$ on the active path $\bpi_t$. Recall also that $\bx_i$ is the center of the ball in the hierarchical net corresponding to node $i$ in the tree. As in our locally-adaptive setting node experts are local learners, $\yhat_{i,t}$ should be viewed as the prediction of the local online learning algorithm sitting at node $i$ of the tree. Let $T_i$ be the subset of time steps when $i$ is on the active path $\bpi_t$. We now introduce the definitions of regret for the tree expert
\begin{align*}
  \Rtree_T(E) = \sum_{t=1}^T \big( \ell_t(\yhat_t) - \ell_t(f_{E,t}) \big)
\end{align*}
and for node expert $i$
\begin{align*}
  \Rlocal_{i,T} = \sum_{t \in T_i} \Big( \ell_t(\yhat_{i,t}) - \ell_t(y^{\star}_i) \Big)
\end{align*}
where $\sH$ is either $[0,1]$ (regression with square loss) or $\cbr{0,1}$ (classification with absolute loss), and
\[
    y^{\star}_i = \argmin_{y \in \sH} \sum_{t \in T_i} \ell_t(y)~.
\]
Note that, for all $f : \sX \to [0,1]$ and for $y^{\star}_i$ defined as above,
\begin{equation}
\label{eq:ystar_def}
    \sum_{t \in T_i} \Big( \ell_t(y^{\star}_i) - \ell_t\big(f(\bx_i)\big) \Big) \le 0~.
\end{equation}
\begin{lemma}
\label{lem:nonparametric_master_lemma}
Suppose that Algorithm~\ref{alg:template_ewa} (or, equivalently, Algorithm~\ref{alg:template_adanormalhedge}) is run on a sequence $\ell_1,\ldots,\ell_T$ of convex and $L'$-Lipschitz losses and let $\sT$ be the resulting hierarchical net. Then for any pruning $E$ of $\sT$ and for any $f : \sX\to\sY$,
\begin{align*}
    R_T(f)
\le
    \Rtree_T(E) + \sum_{k=1}^{D} \sum_{i \in \leaves_k(E)} \Rlocal_{T_i} + L' &\sum_{k=1}^{D} \sum_{i \in \leaves_k(E)} \sum_{t \in T_i} \big|f(\bx_i) - f(\bx_t)\big|~.
\end{align*}
\end{lemma}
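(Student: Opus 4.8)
The plan is to decompose the regret $R_T(f)$ into three pieces by inserting telescoping comparison terms: the prediction of the pruning $f_{E,t}$, the prediction of the node experts $\yhat_{i,t}$, and the optimal constant $\ystar_i$ on each leaf's subsequence. Concretely, I would write, for each $t$,
\[
\ell_t(\yhat_t) - \ell_t\big(f(\bx_t)\big)
= \big(\ell_t(\yhat_t) - \ell_t(f_{E,t})\big)
+ \big(\ell_t(f_{E,t}) - \ell_t\big(f(\bx_t)\big)\big),
\]
and sum over $t=1,\dots,T$. The first summand gives exactly $\Rtree_T(E)$. For the second summand, I would use the fact that $E$ is a pruning of the hierarchical net $\sT$, so the active path $\bpi_t$ passes through exactly one leaf $i$ of $E$ at each round; hence $f_{E,t} = \yhat_{i,t}$ where $i \in \leaves_k(E)$ for the appropriate level $k$, and the sum over $t$ reorganizes as a double sum $\sum_{k=1}^D \sum_{i \in \leaves_k(E)} \sum_{t \in T_i}$ over the disjoint subsequences $\{T_i\}$ indexed by leaves of $E$.

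Within each such subsequence, I would further split
\[
\ell_t(\yhat_{i,t}) - \ell_t\big(f(\bx_t)\big)
= \big(\ell_t(\yhat_{i,t}) - \ell_t(\ystar_i)\big)
+ \big(\ell_t(\ystar_i) - \ell_t\big(f(\bx_i)\big)\big)
+ \big(\ell_t\big(f(\bx_i)\big) - \ell_t\big(f(\bx_t)\big)\big).
\]
Summing the first term over $t \in T_i$ yields $\Rlocal_{T_i}$ by definition. Summing the second term over $t \in T_i$ is nonpositive by~\eqref{eq:ystar_def}, since $\ystar_i$ minimizes $\sum_{t \in T_i} \ell_t(\cdot)$ over $\sH \supseteq \{f(\bx_i)\}$ (here I use that $f$ maps into $[0,1] \supseteq \sH$ in the relevant cases, or more precisely that $f(\bx_i)$ is a legal competitor for the inner minimization), so it can be dropped. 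For the third term, I would invoke the $L'$-Lipschitzness of $\ell_t$ to bound $\ell_t\big(f(\bx_i)\big) - \ell_t\big(f(\bx_t)\big) \le L' |f(\bx_i) - f(\bx_t)|$ pointwise, and sum over $t \in T_i$ and then over leaves and levels to produce the last term of the claimed bound.

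The main obstacle—though it is more bookkeeping than genuine difficulty—is justifying that the leaves of $E$ induce a \emph{partition} of the time steps $\{1,\dots,T\}$ via the sets $T_i$, i.e., that each round's active path $\bpi_t$ meets $E$ in exactly one leaf. This follows from the definition of a pruning (every root-to-leaf path in $\sT$ hits exactly one node of $E$ that is a leaf of $E$) together with the one-to-one correspondence between instances $\bx_t$ and root-to-leaf paths in $\sT$ built by \texttt{propagate}; I would state this correspondence explicitly before doing the reindexing. The rest is a routine rearrangement of finite sums, and no tuning-dependent quantities enter yet—Lemma~\ref{lem:nonparametric_master_lemma} is purely structural, with the concrete rates deferred to the per-application instantiations of $\Rtree_T(E)$, $\Rlocal_{T_i}$, and the Lipschitz term via the admissibility definitions.
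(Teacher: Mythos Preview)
Your proposal is correct and follows essentially the same approach as the paper's proof: decompose $R_T(f)$ into $\Rtree_T(E)$ plus a leaf-indexed sum, then on each leaf subsequence insert $\ystar_i$ and $f(\bx_i)$, drop the $\ystar_i$-versus-$f(\bx_i)$ piece via~\eqref{eq:ystar_def}, and bound the remaining $f(\bx_i)$-versus-$f(\bx_t)$ piece by Lipschitzness. The only cosmetic difference is that the paper does a two-way split and then invokes~\eqref{eq:ystar_def} as an inequality, whereas you write the three-way split out explicitly; the content is identical.
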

\begin{proof}
We decompose regret into two terms: one capturing the regret of the algorithm with respect to a pruning $E$, and one capturing the regret of $E$ against the competitor $f$,
\begin{align*}
  R_T(f) = \sum_{t=1}^T \Big( \ell_t(\yhat_t) - \ell_t\big(f(\bx_t)\big) \Big)
         = \Rtree_T(E)
           + \sum_{t=1}^T \Big(\ell_t(f_{E,t}) - \ell_t\big(f(\bx_t)\big)\Big)~.
\end{align*}
We now split the second term into estimation and approximation error. Define the prediction of a local learner at node $i$ and time step $t$ as $\yhat_{i,t}$,
\begin{align*}
  \sum_{t=1}^T \Big(\ell_t(f_{E,t}) - \ell_t\big(f(\bx_t)\big)\Big)
&= \sum_{k=1}^D \sum_{i \in \leaves_k(E)} \sum_{t \in T_i} \Big(\ell_t(\yhat_{i,t}) - \ell_t\big(f(\bx_t)\big)\Big)
\\&=
    \sum_{k=1}^D \sum_{i \in \leaves_k(E)} \sum_{t \in T_i} \Big(\ell_t(\yhat_{i,t}) - \ell_t(\ystar_i)\Big)
\\&\quad
    + \sum_{k=1}^D \sum_{i \in \leaves_k(E)} \sum_{t \in T_i} \Big(\ell_t(\ystar_i) - \ell_t\big(f(\bx_t)\big)\Big)
\\&\le
    \sum_{k=1}^{D} \sum_{i \in \leaves_k(E)} \Rlocal_{i,T} \tag{regret of local predictors}
\\&\quad
    + \sum_{k=1}^{D} \sum_{i \in \leaves_k(E)} \sum_{t \in T_i} \Big(\ell_t\big(f(\bx_i)\big) - \ell_t\big(f(\bx_t)\big)\Big)
\\&\le
    L' \sum_{k=1}^{D} \sum_{i \in \leaves_k(E)} \sum_{t \in T_i} \big|f(\bx_i) - f(\bx_t)\big|
\end{align*}
using \eqref{eq:ystar_def} and the fact that $\ell_t$ is $L'$-Lipschitz.
Combining terms completes the proof.
\end{proof}
The next key lemma bounds the number of leaves in a pruning $E$ for different settings of the ball radius function.
\begin{lemma}
\label{lem:bound_on_the_leaves}
For any instance sequence $\bsigma_T$, for any $\sT\in\bT(\bsigma_T)$, and for any pruning $E$ of $\sT$, let the random variable $K$ be such that $\P(K=k) = \frac{|E_k|}{|E|}$ for $k=1,\ldots,D$. Then the following statements hold for each $k$,
\begin{align*}
\label{eq:bound_on_leaves:eps_L_k}
    |E| &\leq \E\br{L_K^{\frac{d}{1+d}}} T^{\frac{d}{1+d}} \qquad \text{for} \quad \ve_{k,t} = (L_k t)^{-\frac{1}{1+d}} \tag{Local Lipschitzness}
\\
\label{eq:bound_on_leaves:eps_d_k}
    |E| &\leq \E\br{ (L\, T)^{\frac{d_K}{1+d_K}}} \qquad \text{for} \quad \ve_{k,t} = (L t)^{-\frac{1}{1+d_k}} \tag{Local dimension}
\\
\label{eq:bound_on_leaves:eps_tau_k}
    |E| &\leq \E\br{(L\, \tau_K(T))^{\frac{d}{2+d}}} \qquad \text{for} \quad \ve_{k,t} = (L \tau_k(t))^{-\frac{1}{1+d}} \tag{Local losses}
\end{align*}
\end{lemma}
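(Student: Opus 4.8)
The plan is to bound $|E_k|$, the number of leaves of $E$ at level $k$, by the total number of balls in the packing $S_k$ at that level, and then invoke the standard volumetric packing bound: in a space of metric dimension $d$, a collection of disjoint balls of radius $\ve/2$ contained in a bounded region has cardinality at most $C\,\ve^{-d}$ (here I am using that $S_k$ is an $\ve_k$-packing, so the balls of half-radius are disjoint). Since the radius $\ve_{k,t}$ shrinks with $t$, the relevant radius for counting is the smallest one used, namely $\ve_{k,T}$; more precisely, a center once created at time $t\le T$ has the property that all later centers at that level are at distance $>\ve_{k,t}\ge\ve_{k,T}$ from it, so $|S_k|\le C\,\ve_{k,T}^{-d}$ (for the local-dimension case the net size is additionally capped by construction in Subroutine~\ref{alg:dim_propagate}, which only helps). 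Plugging in the three radius choices gives, for each fixed $k$,
\[
|E_k| \le |S_k| \le
\begin{cases}
C\,(L_k T)^{\frac{d}{1+d}} & \text{(Local Lipschitzness)}\\[2pt]
C\,(L T)^{\frac{d_k}{1+d_k}} & \text{(Local dimension)}\\[2pt]
C\,(L\,\tau_k(T))^{\frac{d}{2+d}} & \text{(Local losses),}
\end{cases}
\]
where in the middle case I use the metric-dimension-$d_k$ packing bound at level $k$ (valid because the net is deliberately restricted to behave as a $d_k$-dimensional packing), and in the last case I use that $\tau_k$ is nondecreasing so $\tau_k(t)\le\tau_k(T)$.

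The second and final step is to convert the per-level bounds into the claimed bound on $|E|=\sum_{k=1}^D|E_k|$ in terms of the expectation over $K$. This is just rewriting: by definition $\P(K=k)=|E_k|/|E|$, so for any sequence of numbers $a_k$ we have $\E[a_K]=\sum_k a_k |E_k|/|E|$, i.e. $\sum_k a_k|E_k| = |E|\,\E[a_K]$. Applying this with $a_k$ equal to the right-hand side of the per-level bound divided by $C$... wait — the cleanest route is: from $|E_k|\le C\,a_k$ we cannot directly sum, since that would give $|E|\le C\sum_k a_k$, not $|E|\,\E[a_K]$. Instead note the bound $|E_k|\le C\,a_k$ combined with $|E_k| = |E|\,\P(K=k)$ is used as follows — we do not have a clean identity, so the intended reading must be that the statement's right-hand side is itself $\sum_k |E_k| a_k / |E| \cdot$ something. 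Re-examining: $\E[a_K]\,|E| = \sum_k a_k|E_k|$; and if additionally $|E_k|\le C a_k$ then $\sum_k a_k |E_k| \ge \sum_k |E_k|^2/C \ge |E|^2/(CD)$ by Cauchy–Schwarz, which is the wrong direction. So the honest statement one proves is $|E| = \sum_k |E_k| \le \sum_k C a_k$, and the expectation form $\E[a_K]\,T^{\alpha}$ etc. is obtained because $\sum_k C a_k = C\sum_k a_k$ and one then bounds $\sum_k a_k \le$ (number of levels contributing) $\times$ max, OR — more likely the intended argument — one uses $|E| = \sum_k |E_k|$ and $|E_k| \le C a_k$ to write $|E| \le \sum_k C a_k = C|E|\sum_k a_k/|E|$; since $|E_k|/|E| = \P(K=k)$ is NOT $a_k/\sum a_k$ in general, the cleanest valid version is simply $|E|\le C\sum_{k=1}^D a_k$ and then one observes $\sum_k a_k \le \E[a_K]\cdot(\text{something})$ only when the $a_k$ happen to be comparable. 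I would therefore present the result as: $|E| \le C \sum_{k=1}^D a_k$, and separately remark that since $|E_k| \le C a_k$ for every $k$, we also get $\E[a_K] = |E|^{-1}\sum_k a_k |E_k| \le C|E|^{-1}\sum_k a_k^2$, and the stated inequality $|E|\le C\,\E[a_K]\cdot(\cdots)$ follows from $\sum_k a_k|E_k|\ge |E|\min_k a_k$ together with... .

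\textbf{Main obstacle.} The genuinely delicate point is the packing count itself: making precise that the \emph{final} hierarchical net $S_k$ built by the dynamic algorithm is an $\ve_{k,T}$-packing (so that the metric-dimension bound $|S_k|\le C\ve_{k,T}^{-d}$ applies), despite the fact that radii shrink over time and centers are added greedily. The resolution is that whenever a center $\bx_t$ is added to level $k$ at time $t$, it lies outside every existing ball of radius $\ve_{k,t}$, hence at distance $>\ve_{k,t}\ge\ve_{k,T}$ from all centers present at time $t$; running this forward, any two centers in the final $S_k$ are at distance $>\ve_{k,T}$, so $S_k$ is an $\ve_{k,T}$-packing and Definition~\ref{def:metric_dimension} yields the count. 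A secondary subtlety, flagged above, is the exact bookkeeping that turns the per-level inequality $|E_k|\le C a_k$ into the expectation-form statement; I expect this to be a short convexity/rearrangement argument (e.g. using $\sum_k a_k |E_k| \le \max_k a_k \sum_k |E_k|$ or Jensen on the concave map $x\mapsto x^{d/(1+d)}$) rather than anything deep, but it is the step most likely to require care to state cleanly.
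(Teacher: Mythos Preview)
Your proposal has a genuine gap, and you have in fact already located it yourself: the step you call a ``secondary subtlety'' (turning per-level bounds $|E_k|\le C\,a_k$ into the expectation form) is the heart of the argument, and it \emph{cannot} be completed from the per-level bounds alone. From $|E_k|\le C\,a_k$ the best you can extract is $|E|\le C D\,\E[a_K]$ (via the Cauchy--Schwarz step you wrote down and then dismissed as ``wrong direction''), which carries an unwanted factor $D$ and does not match the lemma.

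What the paper does differently is to exploit a \emph{joint} constraint across levels rather than $D$ separate ones. Because the leaves of a pruning at different levels occupy disjoint portions of the instance space (a leaf at level $s$ has no descendants in $E$, so no leaf at a deeper level sits inside its ball), the volumetric packing bound applies to all levels simultaneously:
\[
\sum_{s=1}^{D} |E_s|\,\ve_{s,T}^{d} \;\le\; 1,
\qquad\text{equivalently}\qquad
\sum_{s=1}^{D} \frac{|E_s|}{a_s} \;\le\; 1
\]
with $a_s=\ve_{s,T}^{-d}$. Dividing by $|E|$ and writing $p_s=|E_s|/|E|$ gives $\sum_s p_s/a_s \le 1/|E|$, i.e.\ $|E|$ is bounded by the weighted \emph{harmonic} mean of the $a_s$. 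The HM--AM inequality then yields
\[
|E| \;\le\; \Big(\sum_{s} \tfrac{p_s}{a_s}\Big)^{-1} \;\le\; \sum_s p_s\,a_s \;=\; \E[a_K],
\]
which is exactly the stated bound (specialized to each of the three radius choices). Your route---bounding $|E_k|\le|S_k|$ and then packing level by level---throws away precisely the cross-level disjointness that makes the harmonic-mean/HM--AM step possible.

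Finally, your assessment of difficulty is inverted: the fact that the dynamically built $S_k$ is an $\ve_{k,T}$-packing (your ``main obstacle'') is the routine part and your argument for it is fine; the missing idea is the joint packing inequality plus HM--AM.
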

\begin{proof}
  We first recall that leaves of a pruning $E$ correspond to balls in a $\ve_{k,T}$-packing.
Thus, to give a bound on the number of leaves at level $k$, that is $|E_k|$, we estimate the size of the packing formed at level $k$.
However, instead of directly bounding size of the packing, we use a more careful volumetric argument.
In particular, at level $k$ w only pack the volume that is not occupied yet by previous levels ---this helps to avoid gross overestimates, since we take into account the fact that we can only pack a limited volume. Denote volume of a set in an Euclidean space by $\vol(\cdot)$, and let $\pack_k$ stand for the collection of balls at level $k$ of the packing.
\paragraph{Local Lipschitzness.}
Pick any $k=1,\ldots,D$. Recalling that $\sX$ is the unit ball,
\begin{align*}
    |E_k|
&\leq
    \frac{\vol(\sX) - \vol\pr{\bigcup_{s=1}^{k-1} \pack_s}}{\vol(\sB(\ve_{k,T}))}
=
    \frac{1 - \sum_{s=1}^{k-1} |E_s| \ve_{s,T}^{d} }{\ve_{k,T}^d}
\\&=
    \pr{L_k T}^{\frac{d}{1+d}} - \sum_{s=1}^{k-1} |E_s| \pr{\frac{L_k}{L_s}}^{\frac{d}{1+d}} \tag{using the definition of $\ve_{k,t}$.}
\end{align*}
Dividing both sides by $L_k^{\frac{d}{1+d}}$ we get
\begin{align*}
  \sum_{s=1}^{k} \frac{|E_s|}{L_s^{\frac{d}{1+d}}} \leq T^{\frac{d}{1+d}}
\end{align*}
Since $k$ is chosen arbitrarily, we can set $k=D$ and write
\[
    \sum_{s=1}^{D} \frac{|E_s|}{L_s^{\frac{d}{1+d}}} \leq T^{\frac{d}{1+d}}
\]
or, equivalently,
\[
1 \leq \pr{\sum_{s=1}^{D} \frac{|E_s|}{L_s^{\frac{d}{1+d}}}}^{-1} T^{\frac{d}{1+d}}~.
\]
Multiplying both sides by $|E|$ gives
\begin{align*}
  |E| \leq \pr{\sum_{s=1}^{D} \frac{|E_s|/|E|}{L_s^{\frac{d}{1+d}}}}^{-1} T^{\frac{d}{1+d}}~.
\end{align*}
Now observe that the factor in the right-hand side is a weighted harmonic mean with weights $\frac{|E_1|}{|E|}, \ldots, \frac{|E_D|}{|E|}$.
Therefore the HM-GM-AM inequality (between Harmonic, Geometric, and Arithmetic Mean) implies that
\[
    |E| \leq \E\br{L_K^{\frac{d}{1+d}}} T^{\frac{d}{1+d}}
\]
where the expectation is with respect to $\P(K=k) = \frac{|E_k|}{|E|}$~.
This proves the first statement.
\paragraph{Local dimension.}
Using again the volumetric argument and the appropriate definition of $\ve_{k,t}$
\begin{align*}
    |E_k|
\leq
    \frac{1 - \sum_{s=1}^{k-1} |E_s| \ve_{s,T}^{d_s} }{\ve_{k,T}^{d_k}}
=
    (L\, T)^{\frac{d_k}{1+d_k}} - \sum_{s=1}^{k-1} |E_s| (L\, T)^{\frac{d_k}{1+d_k}-\frac{d_s}{1+d_s}}~.
\end{align*}
Dividing both sides by $(L\, T)^{\frac{d_k}{1+d_k}}$ and rearranging gives
\begin{align*}
|E| \leq \pr{\sum_{s=1}^D \frac{|E_s|/|E|}{(L\, T)^{\frac{d_s}{1+d_s}}}}^{-1}~.
\end{align*}
Once again, observing that the factor in the right-hand side is a weighted harmonic mean with weights $\frac{|E_1|}{|E|}, \ldots, \frac{|E_D|}{|E|}$, by the HM-GM-AM inequality we get
\[
|E| \leq \E\br{(L\, T)^{\frac{d_K}{1+d_K}}}
\]
where the expectation is with respect to $\P(K=k) = \frac{|E_k|}{|E|}$.
\paragraph{Local losses.}
Using once more the volumetric argument and the appropriate definition of $\ve_{k,t}$,
\begin{align*}
    |E_k|
\leq
    \frac{1 - \sum_{s=1}^{k-1} |E_s| \ve_{s,T}^{d} }{\ve_{k,T}^d}
=
    (L\, \tau_k(T))^{\frac{d}{2+d}} - \sum_{s=1}^{k-1} |E_s| \pr{\frac{\tau_k(T)}{\tau_s(T)}}^{\frac{d}{2+d}}~.
\end{align*}
Dividing both sides by $(L\, \tau_k(T))^{\frac{d}{2+d}}$ and multiplying by $|E|$ we get
\begin{align*}
  |E|
\leq
    \pr{\sum_{s=1}^{D} \frac{|E_s|/|E|}{(L\, \tau_s(T))^{\frac{d}{2+d}}}}^{-1}
&\leq
    \E\br{(L\, \tau_K(T))^{\frac{d}{2+d}}}
\end{align*}
where ---as before--- the expectation is with respect to $\P(K=k) = \frac{|E_k|}{|E|}$. The proof is concluded.
\end{proof}
\subsection{Proof of Theorem~\ref{thm:ll_regression_regret}}
We start from Lemma~\ref{lem:nonparametric_master_lemma} with the square loss $\ell_t(y) = \frac{1}{2} \pr{y - y_t}^2$ and $\sY \equiv \sH \equiv [0, 1]$. As $\ell_t$ is $\eta$-exp-concave for $\eta \leq \frac{1}{2}$ and $1$-Lipschitz in $[0,1]$,
we can apply Theorem~\ref{thm:sleeping_ewa_trees} with $L' = 1$. This gives us
\begin{align*}
    R_T(f)
\leq
    \Rtree_T(E) + \sum_{k=1}^{D} \sum_{i \in \leaves_k(E)} \Rlocal_{i,T} + \sum_{k=1}^{D} \sum_{i \in \leaves_k(E)} \sum_{t \in T_i} |f(\bx_i) - f(\bx_t)|~.
\end{align*}
Using Theorem~\ref{thm:sleeping_ewa_trees} combined with $M_T \leq D T$, and then using the first statement of Lemma~\ref{lem:bound_on_the_leaves}, we get that
\begin{align*}
    \Rtree_T(E) \defOtilde |E| \defOtilde \E\br{L_K^{\frac{d}{1+d}}} T^{\frac{d}{1+d}}~.
\end{align*}
\paragraph{Bounding the estimation error.}
Using the regret bound of \ac{FTL} with respect to the square loss~\citep[p.~43]{cesa2006prediction}, we get
\begin{align*}
    \sum_{k=1}^{D} \sum_{i \in \leaves_k(E)} \Rlocal_{i,T}
\leq
    8 \ln(e T) |E|
\leq
    8 \ln(e T) \E\br{L_K^{\frac{d}{1+d}}} T^{\frac{d}{1+d}}
\end{align*}
where we used Lemma~\ref{lem:bound_on_the_leaves} to obtain the second inequality.
\paragraph{Bounding the approximation error.}
By hypothesis, $f \in \sF(E,\sT)$. Using Definition~\ref{def:F_local} and the fact that at time $t$ ball radii at depth $k$ are $\ve_{k,t}$,
\begin{align*}
    \sum_{k=1}^{D} \sum_{i \in \leaves_k(E)} \sum_{t \in T_i} \big|f(\bx_i) - f(\bx_t)\big|
&\leq
    \sum_{k=1}^{D} L_k \sum_{i \in \leaves_k(E)} \sum_{t \in T_i} \ve_{k,t}
\\&\leq
    \sum_{k=1}^{D} L_k \sum_{i \in \leaves_k(E)} \sum_{t=1}^{|T_i|} \ve_{k,t}
\\&=
    \sum_{k=1}^{D} L_k^{\frac{d}{1+d}} \sum_{i \in \leaves_k(E)} \sum_{t=1}^{|T_i|} t^{- \frac{1}{1+d} }
\\&\leq
    \sum_{k=1}^{D} L_k^{\frac{d}{1+d}} \int_0^{T_{E,k}} \tau^{- \frac{1}{1+d} } \diff \tau
\\&\leq
    2 \sum_{k=1}^{D} \pr{ L_k T_{E,k}}^{\frac{d}{1 + d}}~.
\end{align*}
Combining the bound on $\Rtree_T(E)$ with the bounds on the estimation and approximation errors, we get that
\begin{align}
  R_T(f) \defOtilde \E\br{L_K^{\frac{d}{1+d}}} T^{\frac{d}{1+d}} + \sum_{k=1}^D \pr{L_k T_{E,k}}^{\frac{d}{1 + d}} \qquad \forall f \in \sF(E,\sT)
  \label{eq:ll_final_bound}
\end{align}
which completes the proof.
\subsection{Proof of Theorem~\ref{thm:dim_regression_regret}}
Similarly to the proof of Theorem~\ref{thm:ll_regression_regret}, we use the properties of the square loss and Lemma~\ref{lem:nonparametric_master_lemma}.
This gives us
\begin{align*}
    R_T(f)
\leq
    \Rtree_T(E) + \sum_{k=1}^{D} \sum_{i \in \leaves_k(E)} \Rlocal_{i,T} + \sum_{k=1}^{D} \sum_{i \in \leaves_k(E)} \sum_{t \in T_i} \big|f(\bx_i) - f(\bx_t)\big|~.
\end{align*}
Using Theorem~\ref{thm:sleeping_ewa_trees} combined with $M_T \leq D T$ (the largest number of traversed distinct paths), and then using Lemma~\ref{lem:bound_on_the_leaves} (second statement), we get that
\begin{align*}
\Rtree_T(E) &\defOtilde |E| \defOtilde \E\br{(L\, T)^{\frac{d_K}{1+d_K}}}~.
\end{align*}
\paragraph{Bounding the estimation error.}
Using ---as before--- the regret bound of \ac{FTL} with respect to the square loss we immediately get
\begin{align*}
    \sum_{k=1}^{D} \sum_{i \in \leaves_k(E)} \Rlocal_{T_i}
\leq
    8 \ln(e T) |E|
\leq
    8 \ln(e T) \E\br{(L\, T)^{\frac{d_K}{1+d_K}}}
\end{align*}
where the last inequality uses Lemma~\ref{lem:bound_on_the_leaves}.
\paragraph{Bounding the approximation error.}
For all $f \in \sF_L$ and for all $E \in \sE_{\mathrm{dim}}(\sT)$, since at time $t$ the ball radii at depth $k$ are $\ve_{k,t}$,
\begin{align}
    \sum_{k=1}^{D} \sum_{i \in \leaves_k(E)} \sum_{t \in T_i} \big|f(\bx_i) - f(\bx_t)\big|
&\leq
    L \sum_{k=1}^{D} \sum_{i \in \leaves_k(E)} \sum_{t \in T_i} \ve_{k,t}
\\&\leq
    L \sum_{k=1}^{D} \sum_{i \in \leaves_k(E)} \sum_{t=1}^{|T_i|} \ve_{k,t}
\\&\leq
    \sum_{k=1}^{D} L^{1-\frac{1}{1+d_k}} \int_0^{T_{E,k}} \tau^{- \frac{1}{1+d_k} } \diff \tau
\\&\leq
    2 \sum_{k=1}^{D} (L\, T_{E,k})^{\frac{d_k}{1 + d_k}}~.
\end{align}
Combining the bound on $\Rtree_T(E)$ with the bounds on the estimation and approximation errors, we get that
\begin{align}
  R_T(f) \defOtilde \E\br{(L\, T)^{\frac{d_K}{1+d_K}}} + \sum_{k=1}^{D} (L\, T_{E,k})^{\frac{d_k}{1+d_k}} \qquad \forall f \in \sF_L~.
\end{align}
The proof is complete.
\subsection{Proof of Theorem~\ref{thm:tau_regret}}
Here we use the $1$-Lipschitz absolute loss function $\ell_t(y) = |y - y_t|$ and run self-confident \ac{EWA}~\citep{auer2002adaptive} at every node of the tree with
$\sH \equiv \cbr{0,1}$.
Lemma~\ref{lem:nonparametric_master_lemma} gives us the decomposition
\begin{align*}
    R_T(f)
\leq
    \Rtree_T(E) + \sum_{k=1}^{D} \sum_{i \in \leaves_k(E)} \Rlocal_{i,T} + \sum_{k=1}^{D} \sum_{i \in \leaves_k(E)} \sum_{t \in T_i} \big|f(\bx_i) - f(\bx_t)\big|~.
\end{align*}
Theorem~\ref{thm:adanormalhedge} gives us
\begin{align*}
  R^{\text{tree}}_T(f_E) \defOtilde \sqrt{|E| \Lambda_E \ln \pr{\frac{M_T}{|E|}}}~.
\end{align*}
Using once more $M_T \leq D T$, the fact that any pruning $E$ has at least one leaf, and Lemma~\ref{lem:bound_on_the_leaves} (third statement), we get
\begin{align*}
  1 \leq |E| \leq \E\br{(L\, \tau_K(T))^{\frac{d}{1+d}}}~.
\end{align*}
Recall that $\yhat_{i,t}$ is the output at time $t$ of the local predictor at node $i$. By definition of $\tau_k$,
\begin{align*}
    \Lambda_E
=
    \sum_{k=1}^{D} \sum_{i \in \leaves_k(E)} \sum_{t \in T_i} \ell_t(\yhat_{i,t})
\leq
    \sum_{k=1}^{D} \tau_k(T_{E,k})~.
\end{align*}
This gives us
\begin{align*}
    \Rtree_T(E) \defOtilde \sqrt{ \pr{\sum_{k=1}^{D} \tau_k(T_{E,k})} \E\br{(L\, \tau_K(T))^{\frac{d}{2+d}}}}~.
\end{align*}
\paragraph{Bounding the estimation error.}
Let the cumulative loss of the best expert for and node $i$ be defined by
\[
    \Lambda^{\star}_{i,T} = \sum_{t \in T_i} \ell_t(y^{\star}_i) \qquad\text{where}\qquad y^{\star}_i = \argmin_{y \in \cbr{0,1}} \sum_{t \in T_i} \ell_t(y)
\]
Then, \cite[Exercise 2.11]{cesa2006prediction} implies that for a positive constant $c$ (independent of the number of experts and $\Lambda^{\star}_{i,T}$),
$
  \Rlocal_{i,T} \leq 2 \sqrt{2 \ln(2) \Lambda^{\star}_{i,T}} + c \ln(2)
$.
We can thus write
\begin{align*}
    \sum_{k=1}^{D} \sum_{i \in \leaves_k(E)} \Rlocal_{i,T}
&\leq
    \sum_{k=1}^{D} \sum_{i \in \leaves_k(E)} \pr{ 2 \sqrt{2 \ln(2) \Lambda^{\star}_{i,T}} + c \ln(2) }
\\&\leq
    2 \sqrt{2 \ln(2)} \sum_{k=1}^{D} \sqrt{ |E_k| \sum_{i \in \leaves_k(E)}\Lambda^{\star}_{i,T}} + c \ln(2) |E|
\\&\leq 2
    \sqrt{2 \ln(2)} \sum_{k=1}^{D} \sqrt{ |E_k| \tau_k(T_{E,k})} + c \ln(2) |E|
\end{align*}
since, according to the definition of $\tau_{\kappa}$,
\[
\sum_{i \in \leaves_k(E)} \Lambda^{\star}_{i,T} \leq \tau_k(T_{E,k})~.
\]
Next, using the Cauchy-Schwartz inequality,
\begin{align*}
    \sum_{k=1}^D \sqrt{ |E_k| \tau_k(T_{E,k})}
\leq
    \sqrt{\sum_{k=1}^D |E_k|} \sqrt{\sum_{k=1}^D \tau_k(T_{E,k})}
\leq
    \sqrt{\pr{\sum_{k=1}^D \tau_k(T_{E,k})} \E\br{(L\, \tau_K(T))^{\frac{d}{2+d}}}}
\end{align*}
where the last inequality is a consequence of Lemma~\ref{lem:bound_on_the_leaves} (third statement).
This gives us the following bound on the estimation error
\begin{align*}
    \sum_{k=1}^{D} \sum_{i \in \leaves_k(E)} \Rlocal_{i,T}
\defO
    \sqrt{\pr{\sum_{k=1}^D \tau_k(T_{E,k})} \E\br{(L\, \tau_K(T))^{\frac{d}{2+d}}}} + \E\br{(L\, \tau_K(T))^{\frac{d}{2+d}}}~.
\end{align*}
\paragraph{Bounding the approximation error.}
Since we are competing against the class of $L$-Lipschitz functions,
\begin{align*}
    \sum_{k=1}^{D} \sum_{i \in \leaves_k(E)} \sum_{t \in T_i} |f(\bx_i) - f(\bx_t)|
&\leq
    L \sum_{k=1}^{D} \sum_{i \in \leaves_k(E)} \sum_{t \in T_i} \ve_{k,t}
\\&\leq
    L \sum_{k=1}^{D} \sum_{i \in \leaves_k(E)} \sum_{t=1}^{|T_i|} \ve_{k,t}
\\&=
    L^{1-\frac{1}{2+d}} \sum_{k=1}^{D} \sum_{i \in \leaves_k(E)} \sum_{t=1}^{|T_i|} \tau_k(t)^{- \frac{1}{2+d} }
\\&\leq
    L^{\frac{1+d}{2+d}} \sum_{k=1}^{D} \int_0^{\tau_k(T_{E,k})} \theta^{- \frac{1}{1+d} } \diff \theta \tag{since $\tau_k$ is non-decreasing}
\\&\leq
    \frac{3}{2} L^{\frac{1+d}{2+d}} \sum_{k=1}^{D} \tau_k(T_{E,k})^{\frac{1 + d}{2 + d}}~.
\end{align*}
Combining all terms together, the final regret bound is
\[
    R_T(f)
\defOtilde
    \sqrt{ \pr{\sum_{k=1}^{D} \tau_k(T_{E,k})} \E\br{(L\, \tau_K(T))^{\frac{d}{2+d}}}} + \E\br{(L\, \tau_K(T))^{\frac{d}{2+d}}} + \sum_{k=1}^{D} (L\, \tau_k(T_{E,k}))^{\frac{1 + d}{2 + d}}~.
\]

\section{Additional Proofs}
\label{sec:additional_proofs}
\paragraph{Proof of Theorem~\ref{thm:sleeping_ewa}.}
Recall that by definition of $\eta$-exp-concavity of $\ell_t$, $e^{-\eta \ell_t(x)}$ is concave for all $x$. Observe that the relative entropy satisfies
\begin{align*}
    \RE(\bu ~||~ \bw_t) &- \RE(\bu ~||~ \bw_{t+1})
\\&=
    \sum_{i=1}^M u_i \ln\frac{w_{i,t+1}}{w_{i,t}}
\\&=
    \sum_{i \in \sE_t} u_i \ln\frac{w_{i,t+1}}{w_{i,t}}
\\&=
    - \eta \sum_{i \in \sE_t} u_i\,\ell_t(\mu_{i,t}) - U_t \ln\frac{\sum_{j \in \sE_t} w_{j,t}\,e^{-\eta \ell_t(\yhat_{j,t})}}{\sum_{j \in \sE_t} w_{j,t}} \tag{update step in Alg.~\ref{alg:sleeping_ewa}}
\\&\geq
    - \eta \sum_{i \in \sE_t} u_i\,\ell_t(\yhat_{i,t}) + \eta\,U_t\,\ell_t \pr{ \frac{\sum_{j \in \sE_t} w_{j,t}\,\yhat_{j,t}}{\sum_{j \in \sE_t} w_{j,t}} }
    \tag{exp-concavity and Jensen's}
\\&=
    - \eta \sum_{i \in \sE_t} u_i\,\ell_t(\yhat_{i,t}) + \eta\,U_t\,\ell_t(\yhat_t)
\end{align*}
Summing both sides over $t=1,\ldots,T$ we get
\begin{align*}
    \RE(\bu ~||~ \bw_1) &\geq \RE(\bu ~||~ \bw_1) - \RE(\bu ~||~ \bw_T)
=
    - \eta \sum_{t=1}^T \sum_{i \in \sE_t} u_i\,\ell_t(\yhat_{i,t}) + \eta \sum_{t=1}^T U_t\,\ell_t(\yhat_t)~.
\end{align*}
The proof is now complete.
\jmlrQED

\paragraph{Proof of Lemma~\ref{lem:sleeping_ewa_prior}.}
The proof exploits the fact that whenever the weights are initialized uniformly over a subset of the experts, the sequence of predictions remains the same as if the weights were initialized uniformly over all experts. In particular, we show that the predictions obtained assuming weights are initialized with $w_{i,1} = 1/M_T$ for $i \in \sE_1 \cup\cdots\cup \sE_T$ with $\big|\sE_1 \cup\cdots\cup \sE_T\big| = M_T$ are the same as the predictions obtained with $w_{i,1} = 1$ for all $i$. We use an inductive argument to prove that the factor $1/M_T$ introduced by the initialization $w_{i,1} = 1/M_T$ is preserved after each update. Fix a round $t > 1$ and assume that all $w_{i,t-1}$ contain the initialization factor $1/M_T$. Split the set of awake experts into observed ones $\sE_t^{\obs} \subseteq \sE_1 \cup\cdots\cup \sE_{t-1}$ (that is experts which were awake at least once before), and unobserved ones $\sE_t^{\unobs} \equiv \sE_t \setminus \sE_t^{\obs}$. Clearly $w_{i,t} = 1/M_T$ for every $i \in \sE^{\unobs}_t$, as they were never updated. For $i \in \sE^{\obs}_t$, the update rule
\[
w_{i,t} = \frac{w_{i,t-1} e^{-\eta \ell_{i,t-1}}}{\sum_{j \in \sE_{t-1}} w_{j,t-1} e^{-\eta \ell_{j,t-1}}} \sum_{j \in \sE_{t-1}} w_{j,t-1}
\]
shows that the initialization factors that occur in the terms $w_{j,t-1}$ contained in the two sums cancel out, whereas the one contained in $w_{i,t-1}$ remains unchanged.

We can now write the prediction at round $t$ as
  \begin{align*}
    \yhat_t
=
    \frac{\sum_{i \in \sE_t} w_{i,t}\,\yhat_{i,t}}{\sum_{i \in \sE_t} w_{i,t}}
=
    \frac{\sum_{i \in \sE_t^{\obs}} w_{i,t}\,\yhat_{i,t} + \sum_{i \in \sE_t^{\unobs}} w_{i,1}\,\yhat_{i,t}}{\sum_{i \in \sE_t^{\obs}} w_{i,t} + \sum_{i \in \sE_t^{\unobs}} w_{i,1}}
&=
    \frac{M_T \sum_{i \in \sE_t^{\obs}} w_{i,t}\,\yhat_{i,t} + \sum_{i \in \sE_t^{\unobs}} \yhat_{i,t}}{M_T \sum_{i \in \sE_t^{\obs}} w_{i,t} + |\sE_t^{\unobs}|}
\\ &=
    \frac{\sum_{i \in \sE_t} w_{i,t}'\,\yhat_{i,t} }{\sum_{i \in \sE_t} w_{i,t}'}
  \end{align*}
where in the last step we canceled the initialization factor $1/M_T$ from $w_{i,t}$ and introduced $w'_{i,t}$ which differs from $w_{i,t}$ only due to the initialization $w'_{i,1} = 1$. This completes the proof.

\jmlrQED

\end{document}